\DeclareMathOperator*{\argmax}{arg\,max}
\newtheorem{theorem}{Theorem}
\newtheorem{cor}{Corollary}
\newtheorem{definition}{Definition}
\def\C{\mathcal{C}}
\def\D{\mathcal{D}}
\def\L{\mathcal{L}}
\def\Q{\mathcal{Q}}
\def\R{\mathbb{R}}
\def\S{\mathcal{S}}
\def\T{\mathcal{T}}
\def\V{\mathcal{V}}
\def\X{\mathcal{X}}
\def\Y{\mathcal{Y}}
\def\O{\mathcal{O}}
\newcommand{\tOPT}{\mathrm{OPT}}
\newcommand\ddfrac[2]{\frac{\displaystyle #1}{\displaystyle #2}}
\def\blfootnote{\xdef\@thefnmark{}\@footnotetext}
\title{Optimal Data Selection: An Online Distributed View}
\begin{document}
\author{Mariel Werner$^1$ \and Anastasios Angelopoulos$^2$ \and Stephen Bates$^{1,2}$ \and Michael Jordan$^{1,2}$}
\date{
$^1$Department of Electrical Engineering and Computer Sciences, \\ University of California, Berkeley\\
$^2$Department of Statistics, University of California, Berkeley
}
\maketitle

\begin{abstract}
The blessing of ubiquitous data also comes with a curse: the communication, storage, and labeling of massive, mostly redundant datasets. We seek to solve this problem at its core, collecting only valuable data and throwing out the rest via submodular maximization. Specifically, we develop algorithms for the online and distributed version of the problem, where data selection occurs in an uncoordinated fashion across multiple data streams. We design a general and flexible core selection routine for our algorithms which, given any stream of data, any assessment of its value, and any formulation of its selection cost, extracts the most valuable subset of the stream up to a constant factor while using minimal memory. Notably, our methods have the same theoretical guarantees as their offline counterparts, and, as far as we know, provide the first guarantees for \emph{online distributed} submodular optimization in the literature.  Finally, in learning tasks on ImageNet and MNIST, we show that our selection methods outperform baselines in settings of interest by $5-20\%$.
\end{abstract}

\section{Introduction}\label{sec:intro}
It is often infeasible to store and label large, real-world datasets in their entirety due to constraints on memory capacity, annotation resources, and communication bandwidth. Consequently, methods for extracting high-value subsets from large collections of data that incur low selection and memory cost are increasingly indispensable. Within a learning framework, one sensible approach is to assign high value to those subsets of the training data which improve a classifier's performance at test time. 
In practice, however, identifying such high-value training sets is challenging. Real-world datasets are often highly redundant with few truly consequential instances (e.g., sequential frames in a video), such that labels are costly to obtain and of little marginal value. Furthermore, when data is sufficiently plentiful, it may be necessary to filter incoming data in a distributed fashion across several machines or edge devices, so that only the most valuable subset gets sent to a central server. 
In this paper, we address these challenges using online distributed submodular optimization.

Before generalizing to the distributed setting, we preview the way our algorithms select data on a single data stream. Let $\D=\{x_1,...,x_n\} \in \X^n$ denote a sequence of $n$ points, where $\X$ is a feature space, and let $f$ be a function that takes as input a subset of $\D$ and outputs some notion of its desirability, or \emph{value}, as a positive real number.\footnote{For ease of expression, we use set syntax for our sequential framework: by `subset' we mean subsequence, by $A \subseteq B$ we mean that $A$ is a subsequence of sequence $B$, and by $2^A$ we mean the set of all subsequences of sequence $A$.} Points from $\D$ arrive sequentially and we must make a decision at each time step to select the current point or not. Our goal is to select a subset $\L \subseteq \D$ whose value is close to the same-sized subset of $\D$ that has optimal value

\begin{equation}\label{eq:opt-objective}
    \tOPT(f,\D,|\L|),
\end{equation}
where for a general $k \in \mathbb{N}$ we define
\begin{equation}\label{eq:card-constrained-opt}
    \tOPT(f,\D,k)
    \triangleq
    \argmax_{S \subseteq \D:|S|=k}f(S).
\end{equation}

\begin{table*}[t]
    \caption{Threshold schedule suitable for desired objective}
    \begin{tabularx}{\textwidth}[ht!]{| 
    >{\centering\arraybackslash}X| >{\centering\arraybackslash}X|}
    \hline
    \textbf{Practical Objective} & \textbf{Threshold Schedule} \\
    \hline
    Ensure that selected set has high value amongst sets of same cardinality. & 
    $\tau_t=\gamma$, where $\gamma$ is a fixed constant (gives maximal $\nicefrac{1}{2}$-approx. in \eqref{eq:dmgt-intro-bound}).\\
    \hline
    Only select points if their value outweighs their marginal selection cost under cost function $c$. & $\tau_t = c(\L_t \cup \{x_t\}) - c(\L_t)$\\
    \hline
    If training a model on selected points, improve accuracy of model early on and save labeling costs later on, selecting only the most valuable points. & Increase $\tau_t$ with $t$.\\
    \hline
    Select set of size at most $k$, the selection budget. & See \cite{badanidiyuru2014} and related work \cite{agrawal2018,kazemi2019,liu2021,nikolakaki2021,norouzi-fard2018} for different threshold schedules which achieve this objective. \\
    \hline
    \end{tabularx}
    \label{table: threshold schedules}
\end{table*}

\begin{table*}[t]
    \caption{Submodular value function suitable for desired objective (see Supplement Section 3 for proofs of submodularity)}
    \centering
    \begin{tabularx}{\textwidth}[ht!]{| >{\centering\arraybackslash}X| >{\centering\arraybackslash}X|}
    \hline
    \textbf{Practical Objective} & \textbf{Submodular Value Function} \\
    \hline
    Select class-balanced substream from class-imbalanced stream using a model's ($\hat{\pi}$) predictions. &
    $f_{\hat{\pi}}(\L \cup \{x\}) =
    \sum_{k \in [K]} {\hat{\pi}}_k(x) g\left(\sum_{x \in \L \cup \{x\}} {\hat{\pi}}_k(x)\right)$ where $g$ is any concave function.\\
    \hline
    Select points from the stream which are similar to some target set $\Q$. For instance, if we're selecting a training set for a classifier from a stream and want the classifier to perform very well on certain classes, $\Q$ could be a representative set of those classes. &
    In the following examples, $s$ is a similarity metric.
    \newline
    \newline
    Facility Location: $f(\L)=\sum_{y \in \Q}\max_{x \in \L} s(x,y)$ 
    \newline
    \newline
    Graph Cut: $f(\L)=\sum_{x \in \L, y \in \Q}s(x,y)$ 
    \\
    \hline
    \end{tabularx}
    \label{table: value functions}
\end{table*}

Finding $\tOPT$ generally requires solving an intractable combinatorial problem. We use a dynamic-thresholding routine that finds a high-value approximation to $\tOPT$ without storing $\D$, simply by selecting a each point in the stream if its marginal value exceeds a data-dependent threshold. This procedure, which we call \emph{dynamic marginal gain thresholding} (\texttt{DMGT} --- see Algorithm \ref{alg:dmgt}), is the core routine of our distributed selection algorithms. Formally, at time step $t$, given the currently selected set $\L_{t-1}$, \texttt{DMGT} selects the current point $x_t$ if
\begin{equation}\label{eq:sel-rule}
    f(\L_{t-1} \cup \{x_t\}) - f(\L_{t-1})
    >
    \tau_t,
\end{equation}
where $\tau_t$ is a data-dependent threshold set by any protocol that the analyst desires. Under this selection rule, given any submodular, nonnegative, monotone increasing value function $f$ and schedule of thresholds, $\{\tau_t\}_{t \in [|\D|]}$, \texttt{DMGT} returns a set $\L \subseteq \D$ such that
\begin{equation}\label{eq:dmgt-intro-bound}
    f(\L) 
    \geq 
    \ddfrac{\tau_{\min}}{\tau_{\min} + \tau_{\max}} f(\tOPT(f,\D,|\L|)),
\end{equation}
where $\tau_{\min}$ and $\tau_{\max}$ are the minimum and maximum elements of $\{\tau_t\}_{t \in |\D|}$ respectively. In words, \texttt{DMGT} selects a set whose value is \emph{at least} a $\nicefrac{\tau_{\min}}{\tau_{\min}+\tau_{\max}}$-approximation to the value of the optimal, same-sized set. The pre-factor in \eqref{eq:dmgt-intro-bound} is maximal at $\nicefrac{1}{2}$ by choosing $\tau_{\min}=\tau_{\max}$, but in many practical cases, a non-uniform schedule might be preferable (see Table \ref{table: threshold schedules} for some examples). Our bound gives a guarantee for any choice of threshold schedule. Tables \ref{table: threshold schedules} and \ref{table: value functions} illustrate a few examples of many different possible pairings between practical objectives, threshold schedules, and value functions for which our method gives guarantees.

\paragraph{Distributed Setting.} The generality of our method (namely that for any choice of $\{\tau_t\}_{t \in |\D|}$, the threshold-dependent bound in \eqref{eq:dmgt-intro-bound} holds) is useful in distributed settings. Oftentimes, it's infeasible to store large amounts of data at a central site for post-processing, necessitating that data selection be distributed across multiple agents or devices. Regardless of agents' different selection resources, our distributed online algorithms still return a set whose value is close to the value of the optimal same-sized subset from the agents' pooled streams. Specifically, we achieve the same style of $O(\nicefrac{1}{\min(M,k)})$-approximation observed in distributed offline submodular optimization \cite{mirzasoleiman2013}, where $M$ is the number of agents and $k$ is the selection budget.

\subsection{Related work}\label{sec:rel_work}
Solving \eqref{eq:card-constrained-opt} effectively and efficiently is a longstanding problem in submodular optimization since finding an exact solution is NP-hard for most choices of submodular $f$ \cite[]{feige1998}. For monotone increasing, nonnegative, and submodular $f$, and a cardinality constraint on feasible sets, \cite{nemhauser1978} proposed a greedy $(1-\nicefrac{1}{e})$-approximation, followed by computationally efficient variants \cite[]{minoux1978,vondrak2014,kumar2015,mirzasoleiman2015}. These algorithms are non-streaming and require that $\D$ be maintained in memory. In the streaming setting, \cite{badanidiyuru2014} achieved a $(\nicefrac{1}{2} - \epsilon)$-approximation for any $\epsilon > 0$ in $O(\nicefrac{k \log k}{\epsilon})$ memory (where $k$ is an upper bound on cardinality of feasible subsets). Subsequently, \cite{kazemi2019} achieved the same approximation using only $O(\nicefrac{k}{\epsilon})$ memory. For $\D$ random rather than arbitrary, \cite{norouzi-fard2018} achieved a greater-than-$\nicefrac{1}{2}$-approximation in expectation, \cite{agrawal2018} achieved a $(1-\nicefrac{1}{e}-\epsilon)$-approximation in $O(k2^{\text{poly}(\nicefrac{1}{\epsilon})})$ memory, and most recently \cite{liu2021} achieved the same constant-factor approximation in $O(\nicefrac{k}{\epsilon})$ memory. Motivated by applications to team formation, \cite{nikolakaki2021} introduced a streaming algorithm which approximates an unconstrained, cost-diminished version of \eqref{eq:card-constrained-opt} for arbitrary $\D$ and submodular, nonnegative $f$ in $O(k)$ memory. 
Threshold-based selection algorithms have been known and used in streaming submodular optimization for a long time. However, our algorithm gives constant-factor approximations for a wider class of thresholds.

To our knowledge, we provide the first theoretical guarantees and proposals for \emph{online distributed} submodular maximization algorithms. \cite{clark2014,golovin2010} study distributed online submodular maximization for sensor selection and resource-constrained networks, but allow points to be selected multiple times to learn an unknown value function over rounds. In our work, we do not permit repeated selection of points. Parallelization of offline distributed submodular maximization algorithms across different machines using the MapReduce framework is the focus of \cite{barbosa2015,kumar2015,mirrokni2015,mirzasoleiman2015,barbosa2016,mirzasoleiman2016,liu2018}. Close to our setting of interest is the multi-agent/machine regime in \cite{mirzasoleiman2013}. They show that for offline, $k$-cardinality-constrained optimization of a submodular, nonnegative objective with $M$ agents, it's possible to achieve a $O(\nicefrac{1}{\min(M,k)})$ constant approximation to the value of the optimal subset from the agents' pooled original sets. Even though we restrict to the streaming setting in this work, we achieve the same type scaling in $M$ and $k$ as in the offline setting.

We also draw inspiration from the literature on submodular optimization \cite[]{bach2013} for real-world applications, such as representation, summarization, and network modeling [\cite{kempe2003,dueck2007,leskovec2007,agrawal2009,elarini2009,das2011,elarini2011,bilmes2011,lin2011,das2012,rodriguez2012,sipos2012,dasgupta2013,tschiatschek2014,zheng2014,bairi2015}]. Of particular interest are \cite{kothawade2021, kaushal2021} who introduced a general framework for non-streaming active learning with submodular information measures \cite[]{iyer2021,iyer2015}. They focused on correcting, with active labeling, real-world cases of class imbalance, presence of out-of-distribution samples, and redundancy in unlabeled data sets.

\subsection{Our contribution}\label{sec:contr}
\begin{enumerate}
    \item  We propose two algorithms for online distributed data selection whose optimality guarantees match those found in offline distributed selection.
    \item We design a general and flexible threshold-based selection procedure as the core of our algorithms that (i)
    given any arbitrary (and possibly adversarial) data stream $\D$, (ii) any submodular, nonnegative, monotone increasing value function $f$, and (iii) any set of thresholds $\{\tau_t\}_{t \in |\D|}$ chosen by the analyst, returns a subset $\L \subseteq \D$ whose value is at least a fraction, $\nicefrac{\tau_{\min}}{\tau_{\min} + \tau_{\max}}$, of the value of the optimal same-sized set. This procedure has no memory requirements beyond what is needed to store $\L$, and it gives a lower bound on the value of $\L$ in terms of the thresholds $\{\tau_t\}_{t \in |\D|}$. Our theoretical analysis gives constant-factor guarantees for a broad set of selection algorithms, substantially enlarging the space of labeling rules that the analyst can deploy with confidence. 
\end{enumerate}

\section{Theory and Methods}\label{sec:methods}
\subsection{Submodularity}\label{sec:submod}
Let $\D = \{x_1,...,x_n\} \in \X^n$ be a sequence of data, with $\X$ a feature space. Our objective is to extract in an online fashion a high-value subset, $\L$, of $\D$ at low cost, where $f:2^{\D} \rightarrow \R^+$ quantifies the value of subsets of $\D$.
The \emph{value} of a set of points depends on the application or the objectives of the analyst. If the objective is to extract a class-balanced subset, then $f$ would assign high value to class-balanced subsets of the data stream $\D$. If the objective is to maximize diversity in selected sets, such that the data are separated in the feature space, subsets of $\D$ with this property have high value. Our method selects each incoming point if its \emph{marginal gain} under $f$, given the currently selected set, exceeds a designated threshold at that time. 
\begin{definition}[Marginal gain]\label{def:marg-gain}
The \emph{marginal gain} under function $g$ of set $S$ given set $T$ is
\begin{equation}\label{eq:marg-gain}
    g(S|T)
    \triangleq
    g(S \cup T) - g(T).
\end{equation}
\end{definition}
Regardless of how large the already selected set is, if a new point adds sufficient value, it will be included. However, for most reasonable value functions, adding a point to a set should always count more than adding it to a superset of that set. \emph{Submodular} functions \cite[e.g.,][]{bach2013,iyer2015} have this diminishing returns property. 
\begin{definition}[Submodularity]\label{def:submod}
A set function $f:2^{\X} \rightarrow \R$ is \emph{submodular} if for all sets $S \subseteq T \subseteq \X$ and $x \in \X \backslash T$, we have
\begin{equation}\label{eq:submod-def}
    f(S \cup \{x\}) - f(S)
    \geq
    f(T \cup \{x\}) - f(T).
\end{equation}
\end{definition}

The diminishing returns property of the value function regulates selection cost by rejecting points that don't contribute sufficient value to the already-selected set.

\subsection{Distributed Dynamic Marginal Gain Thresholding}\label{sec:dis-dmgt}
In this section we present our two online distributed selection algorithms, \texttt{Distributed DMGT} (Alg. \ref{alg:dist-dmgt}) and \texttt{Distributed DMGT w/ Filtering} (Alg. \ref{alg:dist-dmgt-filt}), along with their core selection routine, \texttt{DMGT}. We first show that the value of pooled sets selected by \texttt{Distributed DMGT} is $O(\nicefrac{1}{M})$-close to the optimal value. We next show that, with further selection by a central agent, \texttt{Distributed DMGT w/ Filtering} tightens this approximation to $O(\nicefrac{1}{\min(M,|\L|)})$, where $\L$ is the set selected by the central agent.

We begin with \texttt{Distributed DMGT}. The algorithm is simple: at every time step, each agent receives a point from their stream of data and compares its marginal value given the currently selected set to a threshold value. This threshold value is allowed to depend on all previous threshold values and all previous data points, including the current point. The agent then selects the point if its marginal value exceeds the corresponding threshold. Finally, the routine returns the combined selected sets of the agents.

We denote a general threshold schedule corresponding to data stream $\D$ by $\{\tau_t\}_{t \in |\D|} \in \R^{|\D|}$, where $\tau_t$ may depend on any information observed in the algorithm by the analyst prior to time step $t$. Formally, $\tau_t = \mathcal{A}(x_1,...,x_t, \tau_1,...,\tau_{t-1})$, where $\mathcal{A}$ is a threshold-setting routine that, at the current time step, is allowed to depend on the current point, all prior points in the stream, and all prior thresholds, and outputs a threshold value for the current time step. 

\begin{algorithm}
\caption{\texttt{Distributed-DMGT}: Distributed Dynamic Marginal Gain Thresholding}
\label{alg:dist-dmgt}
\textbf{Input} $M$ streams of data $\{\D^j\}_{j\in[M]}$, with $\D^j \subset \X$; value function $f:2^{\D} \rightarrow \R^{+}$ (where $\D = \cup_{j \in [M]} \D^j$)
% ; selection budgets $\{k_j\}_{j \in [M]}$
\begin{algorithmic}[1]
\For {$j = 1,...,M$}
% \State $\L^j \leftarrow \texttt{DMGT}(\D^j,f,k_j)$
\State $\L^j \leftarrow \texttt{DMGT}(\D^j,f)$
\EndFor
\State {\bfseries return} $\L = \cup_{j\in[M]} \L^j$.
\end{algorithmic}
\end{algorithm}

\begin{algorithm}
\caption{\texttt{DMGT}: Dynamic Marginal Gain Thresholding}
\label{alg:dmgt}
\textbf{Input} Stream of data $\D\subset{\X}$; value function $f:2^\D \rightarrow \R^+$
% ; selection budget $k$
\begin{algorithmic}[1]
\State Set of currently selected points
$\L_0=\emptyset$
\For {$t=1,...,|\D|$}
\State Receive point $x_t$.
% \If {$|\L_t| < k$}
\State Set $\tau_t = \mathcal{A}(x_1,...,x_t,\tau_1,...,\tau_{t-1})$.
% \Else 
% \State Set $\tau_t = f(\{x_t\})$. \label{line:sel-budget}
% \EndIf
\If {$f(\L_{t-1} \cup \{x_t\}) - f(\L_{t-1}) > \tau_t$}\label{line:dmgt-sel-rule}
\State $\L_{t} \leftarrow \L_{t-1} \cup \{x_t\}$
\Else
\State $\L_t \leftarrow \L_{t-1}$
\EndIf
\EndFor
\State \Return $\L = \L_t$
\end{algorithmic}
\end{algorithm}

\begin{theorem}[Near optimality of \texttt{Distributed DMGT}]\label{thm:dist-dmgt}
Suppose \text{\texttt{Distributed DMGT}} runs on $M$ separate data streams $\{\D^j\}_{j \in [M]}$, each with submodular, nonnegative, monotone increasing value function $f$. Then
\begin{align}\label{eq:dist-dmgt-bound}
    f(\L) 
    \geq
    \underbrace{\frac{\tau_{\min}}{M(\tau_{\min} + \tau_{\max})}f(\tOPT(f,\D,|\L|))}_{\text{constant-factor optimal}}
     + 
    \underbrace{\frac{\tau_{\min}\tau_{\max}}{M(\tau_{\min} + \tau_{\max})} |\tOPT(f,\D,|\L|) \cap \L|}_{\text{$\geq 0$}},
\end{align}
where $\tau_{\min}$ and $\tau_{\max}$
are the minimum and maximum elements across all the agents' threshold schedules.
\end{theorem}
\begin{proof}
    See \ref{proof:dist-dmgt}.
\end{proof}

The first term in \eqref{eq:dist-dmgt-bound} is the important one: it bounds the value of $\L$ in terms of the value of the same-sized optimal set. The second term, a relic of the proof, is always positive and only improves the optimality of $f(\L)$. \texttt{Distributed DMGT} returns a set whose value is a constant-factor approximation to the value of the optimal set. The approximation factor depends on the number of agents and the minimum and maximum values observed in the agents' combined threshold sets. This is an ex-post-facto guarantee on the selected set, and our method can provide it for any threshold schedule, with varying degrees of optimality. The practitioner can always use prior-knowledge about the data to set a threshold schedule which will yield a higher-value set.

We note two things: 
\\\\
1. The results in Theorems \ref{thm:dist-dmgt} and \ref{thm:dist-dmgt-filt} are for a general threshold schedule $\{\tau_t\}_{t \in [|\D|]}$. One particular threshold schedule of common interest targets a set size pre-specified by the analyst. Ways to achieve this goal are known in the literature \cite[]{agrawal2018,badanidiyuru2014,kazemi2019,liu2021,nikolakaki2021,norouzi-fard2018}. Our results readily provide novel guarantees for distributed versions of the algorithms in \cite{agrawal2018,badanidiyuru2014,kazemi2019,liu2021,nikolakaki2021,norouzi-fard2018}, which use thresholds designed to achieve a target set size.
% \\\\
% 2. Line \ref{line:sel-budget} of \texttt{DMGT} ensures that the size of the selected set $\L$ never exceeds the budget $k$. This is due to the submodularity of $f$: the non-marginal submodular value of a point always exceeds its marginal submodular value given another non-empty set, ensuring that once the budget has been reached, the condition in line  \ref{line:dmgt-sel-rule} is never met.
\\\\
2. An optimality guarantee for \texttt{DMGT} can be recovered from \eqref{eq:dist-dmgt-bound} by setting $M=1$.

\subsection{Distributed DMGT with Filtering}
\texttt{Distributed DMGT} returns a high-value union of subsets from a collection of distributed streams. If those subsets are quite similar though, a central agent can further filter them, ultimately returning a smaller set with comparable value. In this case, the dependence on $M$ is undesirable because the collective value of the agents' sets does not scale with $M$. We design \texttt{Distributed DMGT w/ Filtering} to address this problem. In particular, we show that with further filtering by a central agent, the dependence on $M$ in our value-approximation bound potentially vanishes. 

In \texttt{Distributed DMGT w/ Filtering}, there are again $M$ agents with individual data streams $\{\D^j\}_{j \in [M]}$, plus a single central agent. Each distributed agent runs \texttt{DMGT} individually on their stream and whenever they select a point, they broadcast it to the central agent. This induces a stream of points for the central agent, generated by the distributed agents' collective selection decisions. The central agent simultaneously runs \texttt{DMGT} on this induced stream and returns the selected set. For concision in Algorithm \ref{alg:dist-dmgt-filt}, we present the distributed agents' and central agent's selection steps sequentially, and defer the fully online version to Appendix \ref{apx:dist-dmgt-filt}. The theoretical guarantee in Theorem \ref{thm:dist-dmgt-filt} holds for both versions.

\begin{algorithm}
\caption{\texttt{Distributed DMGT w/ Filtering}: Distributed Dynamic Marginal Gain Thresholding w/ Filtering}\label{alg:dist-dmgt-filt}
\textbf{Input} $M$ streams of data $\{\D^j\}_{j\in[M]}$;
value function $f:2^{\D} \rightarrow \R^{+}$ (where $\D = \cup_{j \in [M]}\D^j$)
% ; selection budget $k$
\begin{algorithmic}[1]
% \State $\L = \texttt{Distributed DMGT}(\{\D_j\}_{j \in [M]},k)$
\State $\L = \texttt{Distributed DMGT}(\{\D_j\}_{j \in [M]},f)$
% \State $\L^{\text{central}}=\texttt{DMGT}(\L,k)$
\State
$\L^{\text{central}}=\texttt{DMGT}(\L,f)$
\State {\bfseries return} $\L^{\text{central}}$.
\end{algorithmic}
\end{algorithm}

\begin{theorem}[Near optimality of \texttt{Distributed DMGT w/ Filtering}]\label{thm:dist-dmgt-filt} Suppose \text{\texttt{Distributed DMGT w/ Filtering}} runs on $M$ separate data streams $\{\D^j\}_{j \in [M]}$, each with monotone increasing, nonnegative, submodular value function $f$, and the central agent returning selected set $\L^{\text{central}}$. Let $\L = \cup_{j \in [M]}\L^j$, the union of the distributed agents' individually selected sets, and for an arbitrary set $S$, define the quantity
\begin{equation}
    \lambda(S) \triangleq
    \frac{\tau_{\min}(S)}{\tau_{\min}(S) + \tau_{\max}(S)},
\end{equation}
where $\tau_{\min}(S)$ denotes the minimum threshold value corresponding to points in $S$, and $\tau_{\max}(S)$ the maximum value. Then
\begin{align}
  &f(\L^{\text{\emph{central}}}) 
  \geq \\ 
  &\bigg[\min\bigg(1,\frac{|\L^{\text{\emph{central}}}|}{\max_{j \in [M]}|\L^{j}|}\bigg) \min\bigg(1, \frac{\min_{j \in [M]}|\L^{j}|}{|\L^{\text{\emph{central}}}|}\bigg)
  \frac{\lambda(\L^{\text{\emph{central}}})\min_{j \in [M]}\lambda(\L^j)}{\min(M,|\L^{\text{\emph{central}}}|)}\bigg]f(\tOPT(f,\D,|\L^{\text{\emph{central}}}|))\label{eq:dist-dmgt-filt}.
\end{align}
\end{theorem}
\begin{proof}
    See \ref{proof:dist-dmgt-filt}.
\end{proof}
When the central agent has selection budget $k$, \eqref{eq:dist-dmgt-filt} gives that the selected set's value is $\O(\nicefrac{1}{\min(M,k)})$-close to the value of the optimal same-sized set from the distributed streams. This is the same scaling observed in the offline distributed submodular maximization algorithms in \cite{mirzasoleiman2013}.
% When all agents have the same selection budget $k$, \eqref{eq:dist-dmgt-filt} gives that the selected set's value is $\O(\nicefrac{1}{\min(M,k)})$-close to the value of the optimal same-sized set from the distributed streams. This is the same scaling observed in the offline distributed submodular maximization algorithms in \cite{mirzasoleiman2013}.

\subsection{Examples of Threshold Schedules}\label{sec:thresh-sched}
\paragraph{Uniform.} The $\nicefrac{\tau_{\min}}{\tau_{\min} + \tau_{\max}}$-factor in our bounds, \eqref{eq:dist-dmgt-bound}, is maximal at $\nicefrac{1}{2}$ for uniform threshold values ($\tau_{\min}=\tau_{\max}$). This shows that setting a high uniform value for the thresholds will yield a high-value set and while restricting the size of that set, thus saving selection costs. However, other threshold schedules may be more suitable for certain settings/objectives (e.g. see Table \ref{table: threshold schedules} and the next discussion of marginal cost for examples). With a well-chosen, non-uniform threshold schedule, our algorithm might return a set whose true factor of optimality is much greater than $\nicefrac{1}{2}$. This shows that while our bounds in Theorems \ref{thm:dist-dmgt} and \ref{thm:dist-dmgt-filt} give optimality guarantees for any threshold-based streaming algorithm, they are not tight for all cases.
\paragraph{Marginal Cost.} Thresholds can also be set adaptively, based on information acquired from the data stream.
One intuitive way to do this, given the motivating themes of \emph{value} and \emph{cost} in this paper, is as a sequence of marginal costs. Formally, \emph{cost} can be defined as a function $c:2^{\D} \rightarrow \R^+$ which takes in a subset of $\D$ and assigns a cost to it in the form of a real nonnegative number. For the practitioner who wants to train a classifier on the selected data, one intuitive choice for $c$ is cardinality, or a positive-constant scaling of cardinality, reflecting that larger sets are more costly to select since they contain more points which require expensive labeling. Given such a cost, we can define a sequence of thresholds. One reasonable approach is: at time step $t$, given the currently selected set $\L_{t-1}$, only select the current point $x_t$ if the marginal value of doing so outweighs the marginal cost; that is, if
\begin{equation}\label{eq:marginal-cost}
    f(\L_{t-1} \cup \{x_t\}) - f(\L_{t-1})
    \\
    > c(\L_{t-1} \cup \{x_t\}) - c(\L_{t-1}).
\end{equation}
Deploying this rule, the analyst obtains a set whose value obeys \eqref{eq:dist-dmgt-bound}, while ensuring that at every time step, they never pay more for selecting a point than the value it contributes to the set. 
\paragraph{Budget-constrained.} If an agent has a budget constraint of $k$, they can set $\tau_t = f(\{x_t\})$ for all $t$ past the point when the budget is reached. This ensures that no more than $k$ points are selected since by submodularity of $f$, it's never the case that
\begin{equation}
    f(\L_{t-1} \cup \{x_t\}) - f(\L_{t-1}) > f(\{x_t\}).
\end{equation}

\section{Experiments}\label{sec:exps}

We now deploy our method on the practical problem of class imbalance correction, with the objective of increasing prediction accuracy on large scale computer vision datasets. This example clearly highlights the gains that thoughtful selection of data provides over random selection. Our goal is to extract at minimal cost an optimal training set for a classifier, $\hat{\pi}$, from streams of data. We run two experiments in the online distributed setting to accomplish this. In Experiment 1, which implements \texttt{Distributed DMGT}, $M=3$ agents select data from distributed and differently class-imbalanced streams over multiple rounds. After each round, $\hat{\pi}$ is updated on the agents' pooled selected sets and then is used by the agents for selection decisions in the next round. Experiment 2, which implements \texttt{Distributed DMGT w/ Filtering}, is identical to Experiment 1, except we update $\hat{\pi}$ after each round on a further selected subset of the agents' pooled selected sets.
% While it might seem that these algorithms are not distributed since they broadcast a central model back to the agents between rounds, this is a common communication protocol in distributed submodular maximization and federated learning.

In these experiments, we will make the (not always true) assumption that class-balance correlates improved prediction accuracy, i.e. classifiers trained on class-balanced sets predict better. Consequently, we design our value function to select class-balanced subsets from the data stream. Specifically, an agent who encounters the point $x_t$ in their stream will query the label $y_t$ of that point and add $(x_t,y_t)$ to their currently labeled set $\L_{t-1}$ if
\begin{align}\label{eq:class-balance-sel-rule}
    \sum_{k \in [K]} \hat{\pi}_k(x_t)\big[\sqrt{1 + |\{(x,y) \in \L_{t-1}: y = k\}|} - 
    \sqrt{|\{(x,y) \in \L_{t-1}: y = k\}|}\big]
    >
    \tau_t,
\end{align}
where $[K]$ is the set of classes. Since the selection rule \eqref{eq:class-balance-sel-rule} depends on the labels of prior-selected points, our experiments are in the adaptive-labeling regime. We find that assuming access to past labels improves our model's accuracy and valuation of sets. Our experiments can also be run with a version of \eqref{eq:class-balance-sel-rule} that doesn't use past labels (see Supplement Section 2.2 for details).

We take $\X$ to be images from MNIST and ImageNet, and $\Y$ to be the corresponding labels for those images. For MNIST, we take $\hat{\pi}$ to be an untrained ResNet-50 classifier from the \texttt{torchvision} \cite[]{paszke2019} repository, and for ImageNet, we use the same base model, but pre-train it on SimCLR feature embeddings \cite[]{chen2020} for efficiency. For training $\hat{\pi}$, we use an SGD optimizer with learning rate $10^{-3}$, momentum $0.9$, and weight-decay $5^{-4}$. In forming our class-imbalanced data streams, we follow the same recipe for the agents: 
\begin{enumerate}
    \item Partition the set of classes $[K]$ into $\mathcal{R}$, rare classes, and $\C$, common classes. 
    For all experiments on MNIST, we set $\mathcal{R}$ to be classes $\{0,1,2,3,4\}$ and $\C$ to be classes $\{5,6,7,8,9\}$. For ImageNet, we populate $\mathcal{R}$ and $\C$ with $5$ randomly selected classes each (using the same classes for all agents).
    \item Set an imbalance parameter $\beta$ and from $\X^n \times \Y^n$, sample $\D$, an $\beta \times$-imbalanced sequence of image-label pairs. That is, $\D$ should contain $\beta$ times as many points whose $\Y$ coordinate belongs to $\C$ as to $\mathcal{R}$.
\end{enumerate}    
We run our algorithms over multiple rounds, updating $\hat{\pi}$ on the selected set after each round and initializing it with weights from the previous update. Before deploying $\hat{\pi}$ on its first stream, we warm-start train it on $1000$ points which are class-imbalanced in the same way as the stream. For each update episode, we train $\hat{\pi}$ over $200$ epochs or until training accuracy achieves $0.99$. Between rounds, we calibrate $\hat{\pi}$ with isotonic regression on held-out validation data. For this purpose, on ImageNet we use half of the validation set, and on MNIST half of the test set. The other halves are held back for prediction experiments.

After a few iterations of selection, we obtain a classifier which, having only trained on a fraction of the stream, predicts well on all classes and rare classes at test time.
\begin{figure*}[htbp]
  \centering
  \begin{subfigure}{0.49\textwidth}
    \includegraphics[width=\textwidth,scale=4]{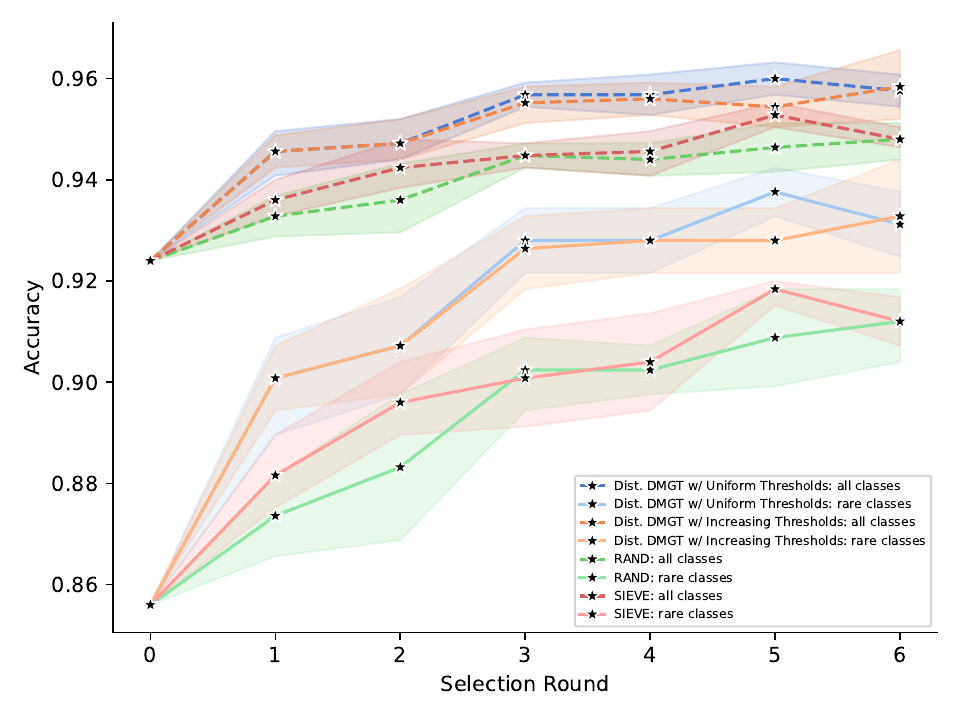}
    \subcaption{ImageNet: accuracy on all and rare classes}
    \label{fig:imnet-dist-dmgt-acc}
  \end{subfigure}
  \hfill
  \begin{subfigure}{0.49\textwidth}
    \includegraphics[width=\textwidth,scale=4]{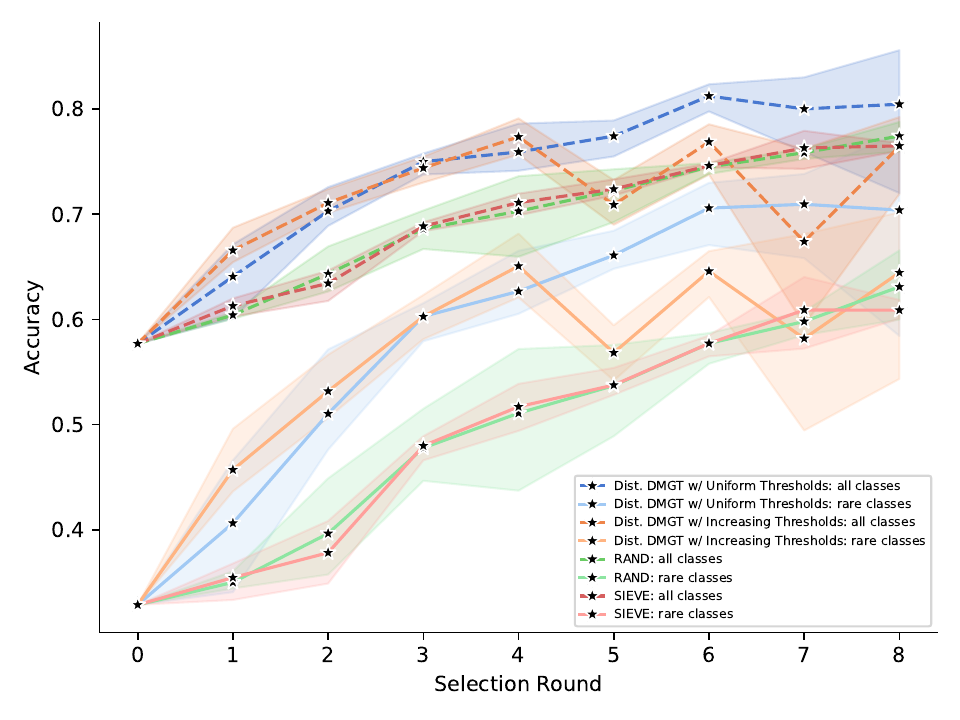}
    \subcaption{MNIST: accuracy on all and rare classes}
    \label{fig:mnist-dist-dmgt-acc}
  \end{subfigure}
  \caption{\textbf{Prediction accuracy on ImageNet and MNIST of \texttt{Distributed DMGT}.} We show prediction accuracy of \texttt{Distributed DMGT} with uniform and increasing threshold schedules vs. \texttt{RAND} and \texttt{SIEVE} after running all algorithms for $6$ selection rounds on ImageNet and $8$ selection rounds on MNIST. Figures show $95\%$ confidence intervals around the mean accuracy value over $5$ random permutations of $\D$ for ImageNet and $3$ random permutations of $\D$ for MNIST. The legend applies to both plots.}
  \label{fig:dist-dmgt-acc}
\end{figure*}

We run both \texttt{Distributed DMGT} (Experiment 1) and \texttt{Distributed DMGT w/ Filtering} (Experiment 2) with uniform and monotone increasing thresholds. We compare the performances of these two threshold schedules against two baselines. The first baseline is a random selection procedure \texttt{RAND} which randomly selects a subset from the stream. The second baseline is a canonical online threshold-based algorithm, SIEVE-STREAMING, proposed by \cite{badanidiyuru2014}. We chose  SIEVE-STREAMING because 1) it was the first online submodular maximization algorithm competitive with the optimal algorithm for offline submodular maximization (SIEVE is $\nicefrac{1}{2}-\epsilon$-optimal, for a tune-able error term $\epsilon$, while the best known offline result is $\nicefrac{1}{e}$-optimal); and 2) many subsequent well-performing online submodular maximization algorithms are derivatives of SIEVE-STREAMING.

All code for our experiments is at this \href{https://github.com/mwerner28/dmgt-code}{github repo}.
\subsection{Experiment 1: Accuracy of \texttt{Distributed DMGT} on ImageNet and MNIST}\label{sec:dist-dmgt-acc-exp}

In Figure~\ref{fig:dist-dmgt-acc}, we run each of our four algorithms with three agents, whose streams each have different imbalance factors ($\beta=2, 5, \ \text{and} \ 10$). For both datasets, during each round, each of the agents receives a 500-point stream and has a budget of 250 points. In each selection round, each agent runs the algorithm on their individual stream and then sends their selected set to the global model, $\hat{\pi}$. The model updates on the agents' respective pooled selected sets. The updated model is then broadcast back to the agents to be used for their selection decisions in the next round. At each round, Figure~\ref{fig:dist-dmgt-acc} compares the prediction accuracy of $\hat{\pi}$ trained on on the agents' pooled selected sets. We see that models trained on the \texttt{Distributed DMGT} sets with both types of thresholds generally outperform \texttt{SIEVE} sets. \texttt{Distributed DMGT} sets with uniform thresholds outperform \texttt{SIEVE} and \texttt{RAND} sets by $\sim10\%$ on MNIST rare classes and $\sim2\%$ on ImageNet rare classes.
\subsection{Experiment 2: Accuracy of \texttt{Distributed DMGT w/ Filtering} on ImageNet and MNIST}\label{sec:filt-dist-dmgt-acc-exp}
The setup for Experiment 2 is identical to that of Experiment 1. However, instead of updating $\hat{\pi}$ on the agents' pooled selected sets, a central agent with a budget of 250 points on ImageNet and 500 points on MNIST runs the selection algorithms once more on the agents' pooled selected sets and updates $\hat{\pi}$ on those further filtered sets. We see that models trained on the \texttt{Distributed DMGT} sets with both types of thresholds generally outperform \texttt{SIEVE}. \texttt{Distributed DMGT} sets with both types of thresholds outperform \texttt{SIEVE} and \texttt{RAND} sets by $\sim5-20\%$ on MNIST rare classes and $\sim1-5\%$ on ImageNet rare classes.

\begin{figure*}[htbp]
  \centering
  \begin{subfigure}{0.49\textwidth}
    \includegraphics[width=\textwidth,scale=4]{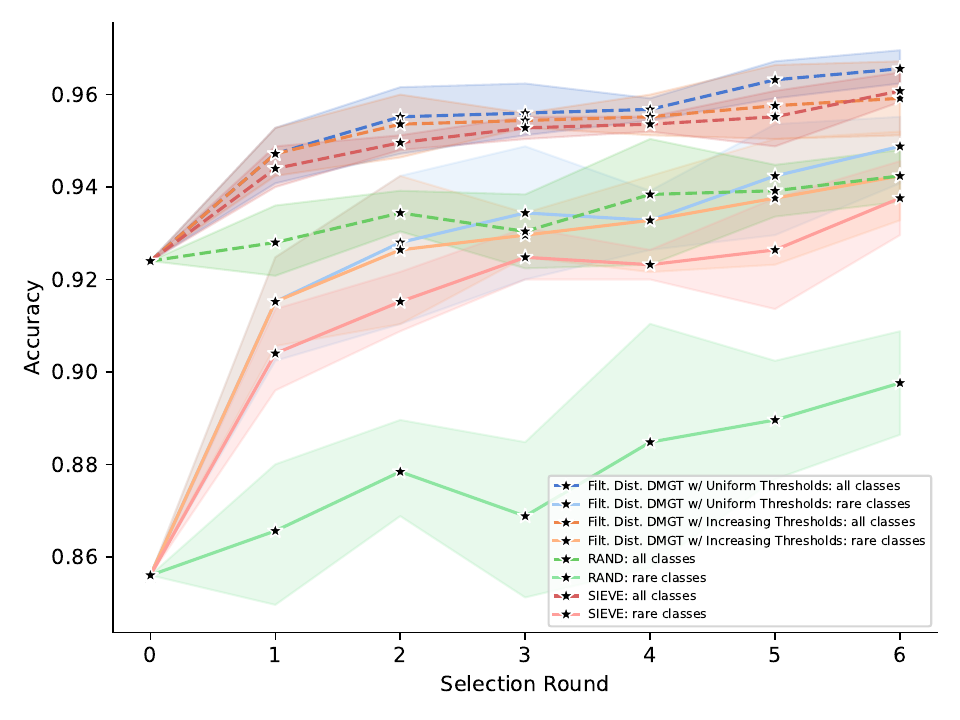}\label{fig:imnet-filt-dist-dmgt-acc}
    \subcaption{ImageNet: accuracy on all and rare classes}
  \end{subfigure}
  \hfill
  \begin{subfigure}{0.49\textwidth}
    \includegraphics[width=\textwidth,scale=4]{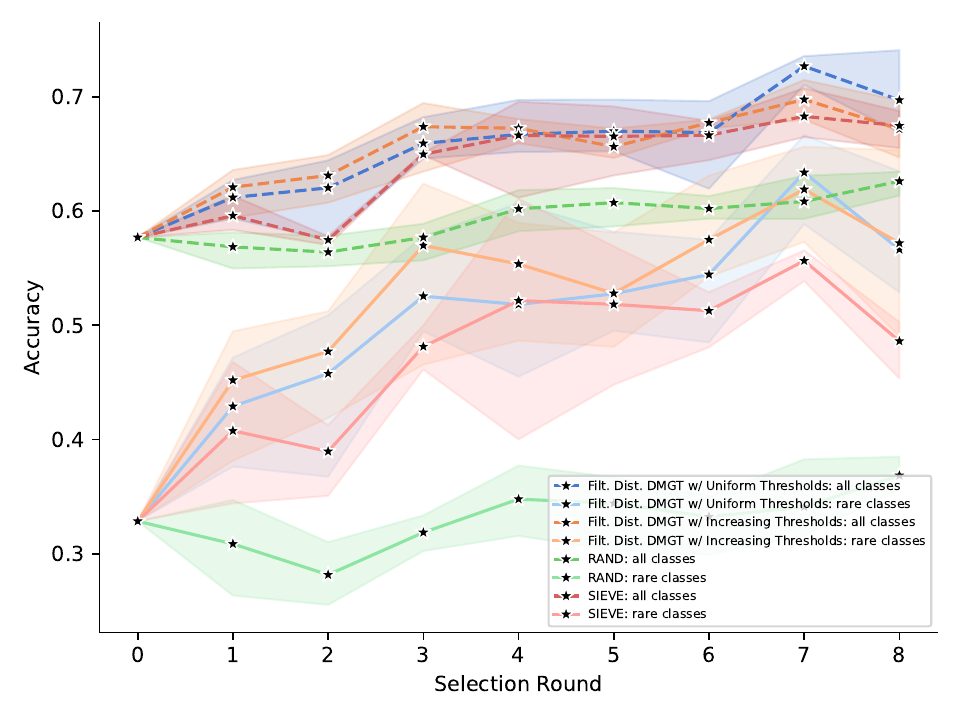}\label{fig:mnist-filt-dist-dmgt-acc}
    \subcaption{MNIST: accuracy on all and rare classes}
  \end{subfigure}
  \caption{\textbf{Prediction accuracy on ImageNet and MNIST of \texttt{Distributed DMGT w/ Filtering}.} We show prediction accuracy of \texttt{Distributed DMGT w/ Filtering} with uniform and increasing threshold schedules vs. \texttt{RAND} and \texttt{SIEVE}. Figures show $95\%$ confidence intervals around the mean accuracy value over $5$ random permutations of $\D$ for ImageNet and $3$ random permutations of $\D$ for MNIST.}
  \label{fig:filt-dist-dmgt-acc}
\end{figure*}

\subsection{Experiment 3: Class Balance using \texttt{Distributed DMGT w/ Filtering} on ImageNet and MNIST}\label{sec:class-bal-exp}
Figure \ref{fig:class-balance-filt-dmgt} gives more nuanced information on the types of sets \texttt{Distributed DMGT w/ Filtering} selects when using uniform vs. increasing thresholds. We see that the \texttt{SIEVE} and \texttt{RAND} sets continue to be class-imbalanced, whereas both \texttt{Distributed DMGT w/ Filtering} sets approach class-balance. Furthermore, \texttt{Distributed DMGT w/ Filtering} sets that use increasing thresholds are much smaller than \texttt{Distributed DMGT w/ Filtering} sets that use uniform thresholds, while the accuracy of the models trained on both sets is comparable (Figure~\ref{fig:filt-dist-dmgt-acc}). This gives empirical evidence in favor of well-chosen threshold schedules, even though our theory gives the tightest lower bound for uniform threshold-schedules. See Supplement Section 2.3 for more detailed discussion of this experiment. 
\begin{figure}
  \centering
  \begin{subfigure}{0.4\textwidth}
    \includegraphics[width=\textwidth,scale=4]{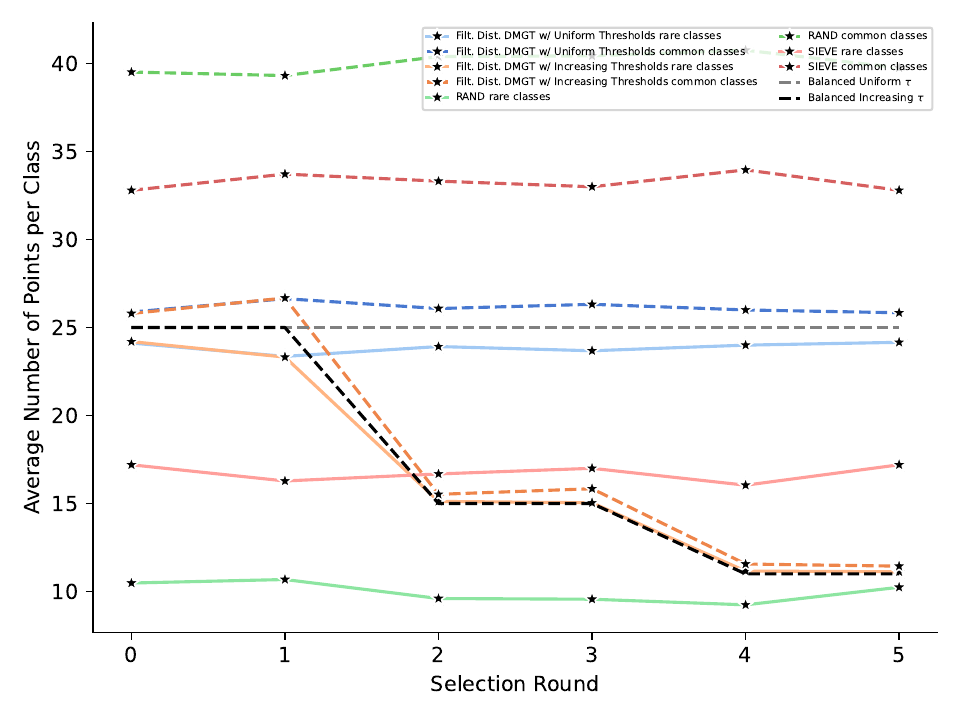}\label{fig:imnet-class-balance-filt-dist}
  \end{subfigure}
  \begin{subfigure}{0.4\textwidth}
    \includegraphics[width=\textwidth,scale=4]{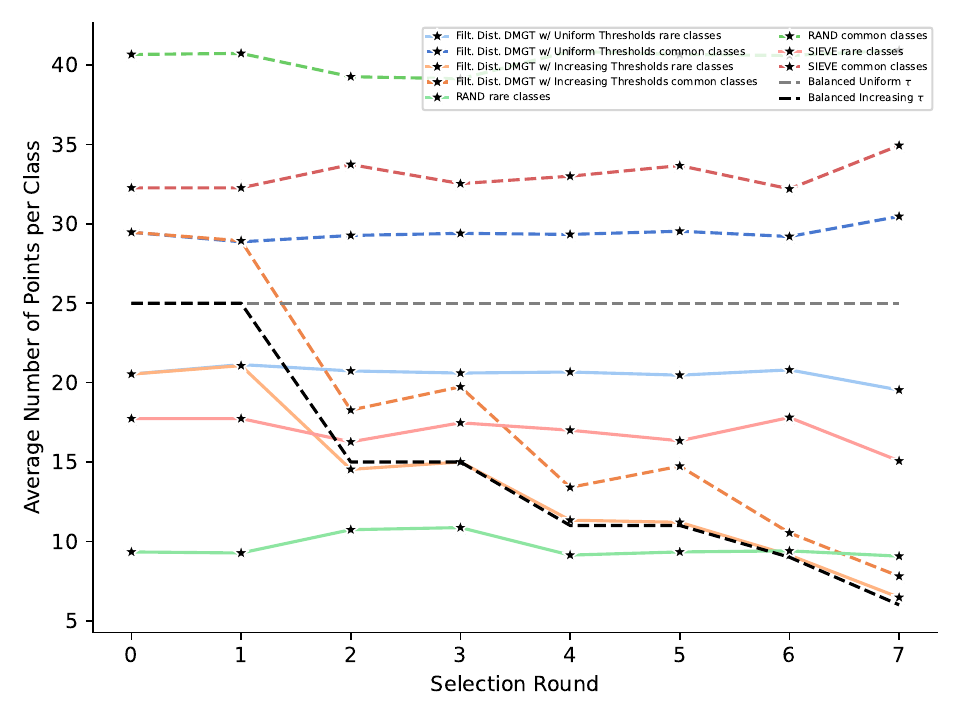}\label{fig:mnist-class-balance-filt-dist}
    \subcaption{Class-balance on all and rare classes -- ImageNet (upper plot), MNIST (lower plot)}
  \end{subfigure}
  \caption{\textbf{Class Balance on ImageNet and MNIST of \texttt{Distributed DMGT w/ Filtering}.} We report class balance convergence of \texttt{Distributed DMGT w/ Filtering} with uniform and increasing threshold schedules vs. \texttt{RAND} and \texttt{SIEVE}.}
  \label{fig:class-balance-filt-dmgt}
\end{figure}
%%%%%%%%%%%%%%%%%%%%%%%%%%%%%%%%%%%%%%%%%%%%%%%%%%%%%%%%%%%%
\printbibliography

@inproceedings{nikolakaki2021,
title={An Efficient Framework for Balancing Submodularity and Cost},
author={Nikolakaki, Sofia Maria and Ene, Alina and Terzi, Evimaria},
booktitle={The 27th ACM SIGKDD Conference on Knowledge Discovery and Data Mining (KDD)},
pages={1256-1266},
year={2021},
doi={10.1145/3447548.3467367},
url={https://doi.org/10.1145/3447548.3467367}
}

@inproceedings{badanidiyuru2014,
title={Streaming Submodular Maximization: Massive Data Summarization on the Fly},
author={Badanidiyuru, Ashwinkumar and Mirzasoleiman, Baharan and Karbasi, Armin and Krause, Andreas},
booktitle={Proceedings of the 27th ACM SIGKDD Conference on Knowledge Discovery and Data Mining (KDD)},
pages={671-680},
year={2014},
doi={10.1145/2623330.2623637}
}

@inproceedings{
kazemi2019,
title={Submodular Streaming in All its Glory: Tight Approximation, Minimum Memory and Low Adaptive Complexity}, 
author={Ehsan Kazemi and Marko Mitrovic and Morteza Zadimoghaddam and Silvio Lattanzi and Amin Karbasi},
booktitle={Proceedings of the 36th International Conference on Machine Learning (ICML)},
volume={97},
pages={3311--3320},
year={2019},
pdf={http://proceedings.mlr.press/v97/kazemi19a/kazemi19a.pdf},
url={https://proceedings.mlr.press/v97/kazemi19a.html}
}

@article{
kothawade2021,
title={SIMILAR: Submodular Information Measures Based Active Learning in Realistic Scenarios}, 
author={Kothawade, Suraj and Beck, Nathan and Killamsetty, Krishnateja and Iyer, Rishabh},
year={2021},
eprint={2107.00717},
archivePrefix={arXiv},
primaryClass={cs.LG}
}

@inproceedings{
norouzi-fard2018,
title={Beyond 1/2-Approximation for Submodular Maximization on Massive Data Streams}, 
author={Norouzi-Fard, Ashkan and Tarnawski, Jakub and Mitrović, Slobodan and Zandieh, Amir and Mousavifar,  Aida and Svensson, Ola},
booktitle={Proceedings of the 35th International Conference on Machine Learning (ICML)},
year={2018},
volume={80},
pages={3829--3838},
pdf={http://proceedings.mlr.press/v80/norouzi-fard18a/norouzi-fard18a.pdf},
url={https://proceedings.mlr.press/v80/norouzi-fard18a.html}
}

@inproceedings{
nemhauser1978,
title={An Analysis of Approximations for Maximizing Submodular Set Functions},
author={Nemhauser, G.L. and Wolsey, L.A.},
booktitle={Mathematical Programming 14},
pages={265-294},
year={1978}
}

@article{
kaushal2021,
title={PRISM: A Rich Class of Parameterized Submodular Information Measures for Guided Subset Selection}, 
author={Kothawade, Suraj and Kaushal, Vishal and Ramakrishnan, Ganesh and Bilmes, Jeff and Iyer, Rishabh},
year={2021},
eprint={2103.00128},
archivePrefix={arXiv},
note={arXiv:2103.00128}
}

@inproceedings{
bilmes2011,
author={Lin, Hui and Bilmes, Jeff},
title={Proceedings of the 49th Annual Meeting of the Association for Computational Linguistics: Human Language Technologies},
volume={1},
year={2011},
pages={510-520},
isbn={9781932432879},
publisher={Association for Computational Linguistics}
}

@inproceedings{
lin2011,
title={Optimal Selection of Limited Vocabulary Speech Corpora},
author={Lin, Hui and Bilmes, Jeff},
booktitle={Proceedings of the 28th Conference on Uncertainty in Artificial Intelligence},
pages={479-490},
year={2012}
}

@inproceedings{
tschiatschek2014,
title={Learning Mixtures of Submodular Functions for Image Collection Summarization},
author={Tschiatschek, Sebastian and Iyer, Rishabh and Wei, Haochen and Bilmes, Jeff},
booktitle={Advances in Neural Information Processing Systems (NeurIPS)},
volume={27},
year={2014},
pdf={https://proceedings.neurips.cc/paper/2014/file/a8e864d04c95572d1aece099af852d0a-Paper.pdf}
}

@article{
bach2013,
title={Learning with Submodular Functions: A Convex Optimization Perspective},
author={Bach, Francis},
journal={Foundations and Trends in Machine Learning},
volume={6},
numbers={2-3},
pages={145-373},
year={2013},
doi={10.1561/2200000039}
}

@inproceedings{
chen2020,
title={A Simple Framework for Contrastive Learning of Visual Representations},
author={Chen, Ting and Kornblith, Simon and Norouzi, Mohammad and Hinton, Geoffrey},
booktitle={Proceedings of the 37th International Conference on Machine Learning (ICML)},
volume={19},
pages={1597--1607},
year={2020},
pdf={http://proceedings.mlr.press/v119/chen20j/chen20j.pdf},
url={https://proceedings.mlr.press/v119/chen20j.html}
}

@phdthesis{
iyer2015,
title={Submodular Optimization and Machine Learning: Theoretical Results, Unifying and Scalable Algorithms and Applications},
author={Iyer, Rishabh},
school={University of Washington},
year={2015}
}

@inproceedings{
iyer2021,
title={Submodular Combinatorial Information Measures with Applications in Machine Learning},
author={Iyer, Rishabh and Khargonkar, Ninad and Bilmes, Jeff and Asnani, Himanshu},
booktitle={Proceedings of Machine Learning Research and 32nd International Conference on Algorithmic Learning Theory},
volume={132},
pages={722--754},
year={2021},
pdf={http://proceedings.mlr.press/v132/iyer21a/iyer21a.pdf},
url={https://proceedings.mlr.press/v132/iyer21a.html}
}

@inproceedings{
feige1998,
title={A Threshold of ln n for Approximating Set Cover},
author={Feige, Uriel},
booktitle={Journal of the ACM},
volume={45},
pages={634-652},
year={1998},
url={https://doi.org/10.1145/285055.285059}
}

@inproceedings{
vondrak2014,
title={Fast Algorithms for Maximizing Submodular Functions},
author={Badanidiyuru, Ashwinkumar and Vondrak, Jan},
booktitle={Proceedings of the 25th annual ACM-SIAM Symposium on Discrete Algorithms (SODA)},
pages={1497-1514},
year={2014}
}

@inproceedings{
kumar2015,
title={Fast Greedy Algorithms in {M}ap{R}educe and Streaming},
author={Kumar, Ravi and Moseley, Benjamin and Vassilvitskii, Sergei and Vattani, Andrea},
booktitle={ACM Transactions on Parallel Computing},
volume={2},
issue={3},
pages={1-22},
year={2015},
url={https://doi.org/10.1145/2809814}
}

@inproceedings{
minoux1978,
title={Accelerated Greedy Algorithms for Maximizing Submodular Set Functions},
author={Minoux, Michel},
booktitle={Optimization Techniques},
pages={234-243},
year={1978}
}

@inproceedings{
dasgupta2013,
title={Summarization Through Submodularity and Dispersion},
author={Dasgupta, Anirban  and Kumar, Ravi  and
Ravi, Sujith},
booktitle={Proceedings of the 51st Annual Meeting of the Association for Computational Linguistics (Volume 1: Long Papers)},
pages={1014--1022},
year={2013},
url={https://aclanthology.org/P13-1100}
}

@inproceedings{
dueck2007,
title={Non-metric Affinity Propagation for Unsupervised Image Categorization},
author={Dueck, Delbert and Frey, Brendan J.},
booktitle={IEEE 11th International Conference on Computer Vision (ICCV)},
year={2007}
}

@inproceedings{
elarini2011,
title={Beyond Keyword Search: Discovering Relevant Scientific Literature},
author={El-Arini, Khalid and Veda, Gaurav and Guestrin, Carlos},
booktitle={Proceedings of the 17th ACM SIGKDD international conference on Knowledge discovery and data mining (KDD)},
pages={439-447},
year={2011},
url={https://doi.org/10.1145/2020408.2020479}
}

@inproceedings{
elarini2009,
title={Turning Down the Noise in the Blogosphere},
author={El-Arini, Khalid and Veda, Gaurav and Shahaf, Dafna and Guestrin, Carlos},
booktitle={Proceedings of the 15th ACM SIGKDD international conference on Knowledge discovery and data mining (KDD)},
pages={289-298},
year={2009},
url={https://doi.org/10.1145/1557019.1557056}
}

@inproceedings{
kempe2003,
title={Maximizing the Spread of Influence through a Social Network},
author={Kempe, David and Kleinberg, Jon and Tardos, Eva},
booktitle={Proceedings of the ninth ACM SIGKDD international conference on Knowledge discovery and data mining (KDD)},
pages={137-146},
year={2003},
url={https://doi.org/10.1145/956750.956769}
}

@inproceedings{
leskovec2007,
title={Cost-effective Outbreak Detection in Networks},
author={Leskovec, Jure and Krause, Andreas and Guestrin, Carlos and Faloutsos, Christos and VanBriesen, Jeanne and Glance, Natalie},
booktitle={Proceedings of the 13th ACM SIGKDD international conference on Knowledge discovery and data mining (KDD)},
pages={420-429},
year={2007},
url={https://doi.org/10.1145/1281192.1281239}
}

@inproceedings{
rodriguez2012,
title={Inferring Networks of Diffusion and Influence},
author={Gomez-Rodriguez, Manuel and Leskovec, Jure and Krause, Andreas},
booktitle={ACM Transactions on Knowledge Discovery from Data},
volume={5},
pages={1-37},
year={2012}
}

@inproceedings{
agrawal2009,
title={Diversifying Search Results},
author={Agrawal, Rakesh and Gollapudi, Sreenivas and Halverson, Alan and Ieong, Samuel},
booktitle={Proceedings of the Second ACM International Conference on Web Search and Data Mining},
pages={5-14},
year={2009},
url={https://doi.org/10.1145/1498759.1498766}
}

@inproceedings{
barbosa2015,
title={The power of randomization: Distributed submodular maximization on massive datasets},
author={Barbosa, Rafael da Ponte and Ene, Alina and Nguyen, Huy L. and Ward, Justin},
booktitle={Proceedings of the 32nd International Conference on Machine Learning (ICML)},
volume={37},
pages={1236-1244},
year={2015},
pdf={http://proceedings.mlr.press/v37/barbosa15.pdf},
url={https://proceedings.mlr.press/v37/barbosa15.html}
}

@inproceedings{
barbosa2016,
title={A New Framework for Distributed Submodular Maximization},
author={Barbosa, Rafael da Ponte and Ene, Alina and Nguyen, Huy L. and Ward, Justin},
booktitle={IEEE 57th Annual Symposium on Foundations of Computer Science (FOCS)},
pages={645-654},
year={2016}
}

@inproceedings{
mirrokni2015,
title={Randomized Composable Core-sets for Distributed Submodular Maximization},
author={Mirrokni, Vahab and Zadimoghaddam, Morteza},
booktitle={Proceedings of the 47th Annual ACM Symposium on Theory of Computing (STOC)},
pages={153-162},
year={2015},
url={https://doi.org/10.1145/2746539.2746624}
}

@inproceedings{
mirzasoleiman2013,
title={Distributed Submodular Maximization: Identifying Representative Elements in Massive Data},
author={Mirzasoleiman, Baharan and Karbasi, Amin and Sarkar, Rik and Krause, Andreas},
booktitle={Advances in Neural Information Processing Systems (NeurIPS},
volume={26},
pages={2049-2057},
year={2013},
url={https://proceedings.neurips.cc/paper/2013/file/84d2004bf28a2095230e8e14993d398d-Paper.pdf}
}

@inproceedings{
sipos2012,
title={Temporal Corpus Summarization using Submodular Word Coverage},
author={Sipos, Ruben and Swaminathan, Adith and Shivaswamy, Pannaga and Joachims, Thorsten},
booktitle={21st ACM International Conference on Information and Knowledge Management (CIKM)},
year={2012}
}

@inproceedings{
paszke2019,
title={Pytorch: An Imperative Style, High-performance Deep Learning Library},
author={Paszke, Adam and Gross, Sam and Massa, Francisco and Lerer, Adam and Bradbury, James and Chanan, Gregory and Killeen, Trevor and Lin, Zeming and Gimelshein, Natalia and Antiga, Luca and others},
booktitle={Advances in Neural Information Processing Systems (NeurIPS)},
pages={8026--8037},
year={2019}
}

@article{
liu2021,
title={Cardinality Constrained Submodular Maximization for Random Streams}, 
author={Liu, Paul and Rubinstein, Aviad and Vondrak, Jan and Zhao, Junyao},
year={2021},
eprint={2111.07217},
archivePrefix={arXiv},
note={arXiv:2111.07217}
}

@inproceedings{
liu2018,
author={Liu, Paul and Vondrak, Jan},
title={Submodular Optimization in the {M}ap{R}educe Model},
booktitle={2nd Symposium on Simplicity in Algorithms(SOSA 2019)},
pages={18:1--18:10},
year={2018},
volume={69},
url={http://drops.dagstuhl.de/opus/volltexte/2018/10044}
}

@inproceedings{
mirzasoleiman2015,
title={Lazier than Lazy Greedy},
author={Mirzasoleiman, Baharan and Badanidiyuru, Ashwinkumar and Karbasi, Amin and Vondrak, Jan and Krause, Andreas},
booktitle={Proceedings of the 29th AAAI Conference on Artificial Intelligence (AAAI)},
year={2015},
pages={1812-1818}
}

@article{
agrawal2018,
title={Submodular Secretary Problem with Shortlists}, 
author={Agrawal, Shipra and Shadravan, Mohammad and Stein, Cliff},
year={2018},
eprint={1809.05082},
archivePrefix={arXiv},
note={arXiv:1809.05082}
}

@inproceedings{
mirzasoleiman2016,
title={Fast Distributed Submodular Cover: Public-Private Data Summarization},
author={Mirzasoleiman, Baharan and Zadimoghaddam, Morteza and Karbasi, Amin},
booktitle={Advances in Neural Information Processing Systems (NeurIPS)},
year={2016}
}

@inproceedings{
das2012,
title={Selecting diverse features via spectral regularization},
author={Das, Abhimanyu and Dasgupta, Anirban and Kumar, Ravi},
booktitle={Proceedings of Advances in Neural Information Processing Systems},
volume={25},
pages={1592-1600},
year={2012}
}

@inproceedings{
das2011,
title={Submodular meets Spectral: Greedy Algorithms for Subset Selection, Sparse, Approximation and Dictionary Selection},
author={Das, Abhimanyu and Kempe, David},
booktitle={Proceedings of the 28th International Conference on Machine Learning (ICML)},
pages={1057-1064},
year={2011}
}

@inproceedings{
zheng2014,
title={Submodular Attribute Selection for Action Recognition in Video},
author={Zheng, Jingjing and Jiang, Zhuolin and Chellappa, Rama and Phillips, Jonathon P.},
booktitle={Proceedings of Advances in Neural Information Processing Systems},
volume={27},
pages={1341-1349},
year={2014},
url={https://proceedings.neurips.cc/paper/2014/file/b056eb1587586b71e2da9acfe4fbd19e-Paper.pdf}
}

@inproceedings{
bairi2015,
title={Summarization of Multi-Document Topic Hierarchies using Submodular Mixtures},
author={Bairi, Ramakrishna B. and Iyer, Rishabh and Ramakrishnan, Ganesh and Bilmes, Jeff},
booktitle={Proceedings of the 53rd Annual Meeting of the Association for Computational Linguistics and the 7th International Joint Conference on Natural Language Processing (ACL|IJCNLP)},
volume={1},
pages={553-563},
year={2015}
}

@inproceedings{
clark2014,
title={Distributed online submodular maximization in resource-constrained networks},
author={Clark, A. and Alomair, B. and Bushnell, L. and Poovendran, R.},
booktitle={12th International Symposium on Modeling and Optimization in Mobile, Ad Hoc, and Wireless Networks (WiOpt)},
pages={397-404},
year={2014}}

@article{
golovin2010,
author={Golovin, Daniel and
Faulkner, Matthew and Krause, Andreas},
title={Online Distributed Sensor Selection},
year={2010},
url={https://arxiv.org/abs/1002.1782},
eprinttype={arXiv},
eprint={1002.1782}
}
\newpage
\appendix
\section{Proofs}\label{apx:proofs}
\subsection{Proof of Theorem \ref{thm:dist-dmgt}}\label{proof:dist-dmgt}
\begin{proof}
For each $j\in[M]$, let $\D^j$ be the $j$'th agent's unlabeled data stream, let $\tau^j_x$ be the $j$'th agent's threshold value corresponding to the arrival of an arbitrary point $x \in \D^j$, let
\begin{equation*}
    \L^j
    =
    \{l^j_1,...,l^j_{s_j}\}
\end{equation*}
be agent $j$'s selected set by \texttt{DMGT}, let 
\begin{equation*}
    \L
    =
    \cup_{j \in [M]} \L^j,
\end{equation*}
and let
\begin{equation*}
    \V^j 
    =
    (\tOPT(f,\D,|\L|)\cap \D^j)\backslash \L^j
    =
    \{v^j_1,...,v^j_{m_j}\}.
\end{equation*}
Finally let $S_{<x}$ denote the history of an arbitrary sequence $S$ prior to the arrival of an arbitrary point $x$. Note that in our algorithms we index thresholds $\tau$ by time $t$. Since points from the stream arrive sequentially, we can equivalently index thresholds by points $x$, which is what we do in these proofs. 

Following the initial proof strategy used for Theorem 5.1 in \cite{nikolakaki2021},
\begin{align}
    0 
    &> 
    \sum_{j \in [M]}\sum_{i\in [m_j]}f(\L^j_{<v^j_i} \cup \{v^j_i\}) - f(\L^j_{<v^j_i}) - \tau^j_{v^j_i}
    && \text{step 5 of Alg. \ref{alg:dmgt}; def. of $\V^j$}
    \\&\geq
    \sum_{j \in [M]}\sum_{i\in [m_j]}f(\L^j \cup \V^j_{<v^j_i} \cup \{v^j_i\}) - f(\L^j \cup \V^j_{<v^j_i}) - \tau^j_{v^j_i}
    && \text{submodularity of $f$}
    \\&\geq
    \sum_{j \in [M]}f(\L^j \cup \V^j) - f(\L^j) - \tau_{\max}|\V^j|
    && \text{$\tau_{\max} \geq \tau_x \ \forall x \in [\D]$; telescope sum}
    \\&\geq
    \begin{multlined}[t][10.5cm]
        f(\cup_{j\in[M]} \L^j \cup \V^j) - \tau_{\max}(|\tOPT(f,\D,|\L|)| - \\ |\tOPT(f,\D,|\L|) \cap \L|) - \sum_{j \in [M]}f(\L^j)
    \end{multlined}
    && \text{subadditivity of $f$; def. of $\V^j$}
    \\&\geq
    \begin{multlined}[t][10.5cm]
        f(\tOPT(f,\D,|\L|)) - \tau_{\max}(|\tOPT(f,\D,|\L|)| - \\ |\tOPT(f,\D,|\L|) \cap \L|) - \sum_{j \in [M]}f(\L^j)\label{eq:dmgt-pf-bnd-1}.
    \end{multlined}
    && \text{defs. of $\L^j$, $\V^j$}
\end{align}
For all $j\in[M]$, we have
\begin{align}
    f(\L^j)
    &=
    \bigg[\sum_{i \in [s_j]}f(\L^j_{<l^j_i}\cup \{l^j_i\}) - f(\L^j_{<l^j_i})\bigg]
    \\&>
    \bigg[\sum_{i \in [s_j]}\tau^j_{l^j_i}\bigg]
    &&\text{sel. rule of Alg. \ref{alg:dmgt}}
    \\&\geq
    \tau_{\min}|\L^j|\label{eq:dmgt-pf-bnd-2}.
    &&\text{$\tau_{\min} \leq \tau_x \ \forall x \in [\D]$}
\end{align}
Combining~\eqref{eq:dmgt-pf-bnd-1} and~\eqref{eq:dmgt-pf-bnd-2},
\begin{align}
    \sum_{j\in[M]}f(\L^j) 
    &\geq
    \tau_{\min}|\L|
    &&\text{by \eqref{eq:dmgt-pf-bnd-2}}
    \\&=
    \frac{\tau_{\min}}{\tau_{\max}}\tau_{\max}|\tOPT(f,\D,|\L|)|
    \\&\geq
    \frac{\tau_{\min}}{\tau_{\max}}\bigg[f(\tOPT(f,\D,|\L|)) + \tau_{\max}|\tOPT(f,\D,|\L|) \cap \L| - \sum_{j \in [M]}f(\L^j)\bigg]
    &&\text{by \eqref{eq:dmgt-pf-bnd-1}},
\end{align}
which implies that
\begin{equation}\label{eq:dmgt-pf-bnd-3}
    \sum_{j\in[M]}f(\L^j)
    \geq
    \frac{\tau_{\min}}{\tau_{\min} + \tau_{\max}}f(\tOPT(f,\D,|\L|)) + \frac{\tau_{\min}\tau_{\max}}{\tau_{\min} + \tau_{\max}} |\tOPT(f,\D,|\L|) \cap \L|.
\end{equation}
Finally, let $j^* = \argmax_{j\in[M]}f(\L^j)$. Then
\begin{align}
    f(\L)
    &\geq
    f(\L^{j^*}) 
    &&\text{inc. mon. of $f$}
    \\&\geq 
    \frac{1}{M}\sum_{j\in[M]}f(\L^j)
    &&\text{def. of $j^*$}
    \\&\geq
    \frac{\tau_{\min}}{M(\tau_{\min} + \tau_{\max})}f(\tOPT(f,\D,|\L|)) + \frac{\tau_{\min}\tau_{\max}}{M(\tau_{\min} + \tau_{\max})} |\tOPT(f,\D,|\L|) \cap \L|,
    &&\text{by \eqref{eq:dmgt-pf-bnd-3}}
\end{align}
concluding the proof.
\end{proof}

\subsection{Proof of Theorem \ref{thm:dist-dmgt-filt}}\label{proof:dist-dmgt-filt}
\begin{proof}
Define
\begin{itemize}
    \item $e^*=\argmax_{e \in \tOPT(f,\D,|\L^{\text{central}}|)}f(\{e\})$
    \item $\D^*$: element of $\{\D_1,...,\D_M\}$ which contains $e^*$
    \item $\L^*$: element of $\{\L_1,...,\L_M\}$ selected from $\D^*$
    \item $\L_{\max} = \argmax_{j \in M}f(\L^j)$
    \item $\L=\cup_{j \in [M]}\L^j$.
\end{itemize}
For ease, we assume in this proof that the ratio of large:small set cardinalities are whole numbers (i.e. for any $A$, $B$ such that $|A| \geq |B|$, $\nicefrac{|A|}{|B|} \in \mathbbm{N}$). 
\begin{align}
    f(\L^{\text{central}})
    &\geq
    \lambda(\L^{\text{central}})f(\tOPT(f,\L,|\L^{\text{central}}|))
    && \text{by Theorem \ref{thm:dist-dmgt}}
    \\
    &\geq
    \min\bigg(1,\frac{|\L^{\text{central}}|}{|\L_{\max}|}\bigg)
    \lambda(\L^{\text{central}})f(\L_{\max}) \\
    &\geq
    \min\bigg(1,\frac{|\L^{\text{central}}|}{|\L_{\max}|}\bigg)
    \frac{\lambda(\L^{\text{central}})}{M}\sum_{j \in [M]}f(\L^j)\\
    &\geq
    \min\bigg(1,\frac{|\L^{\text{central}}|}{|\L_{\max}|}\bigg)
    \frac{\lambda(\L^{\text{central}})}{M}\sum_{j \in [M]}\lambda(\L^j)f(\tOPT(f,\D^j,|\L^j|))
    && \text{by Theorem \ref{thm:dist-dmgt}}
    \\
    &\geq
    \min\bigg(1,\frac{|\L^{\text{central}}|}{|\L_{\max}|}\bigg)
    \frac{\lambda(\L^{\text{central}})}{M}\sum_{j \in [M]}\lambda(\L^j)f(\tOPT(f,\D^j,\min_{j \in [M]}|\L^j|))\\
    &\geq
    \min\bigg(1,\frac{|\L^{\text{central}}|}{|\L_{\max}|}\bigg)
    \frac{\lambda(\L^{\text{central}})}{M}\sum_{j \in [M]}\lambda(\L^j)f(\tOPT(f,\D,\min_{j \in [M]}|\L^j|)\cap \D^j)\\
    &\geq
    \min\bigg(1,\frac{|\L^{\text{central}}|}{|\L_{\max}|}\bigg)
    \frac{\lambda(\L^{\text{central}})\min_{j \in [M]}\lambda(\L^j)}{M}f(\text{OPT}(f,\D,\min_{j \in [M]}|\L_j|))\\
    &\geq
    \min\bigg(1,\frac{|\L^{\text{central}}|}{|\L_{\max}|}\bigg) \min\bigg(1, \frac{\min_{j \in [M]}|\L^j|}{|\L^{\text{central}}|}\bigg) \\
    &\phantom{{}=1}\cdot
    \frac{\lambda(\L^{\text{central}})\min_{j \in [M]}\lambda(\L^j)}{M}f(\tOPT(f,\D,|\L^{\text{central}}|))\\
    &\geq
    \min\bigg(1,\frac{|\L^{\text{central}}|}{\max_{j \in [M]}|\L^j|}\bigg) \min\bigg(1, \frac{\min_{j \in [M]}|\L^j|}{|\L^{\text{central}}|}\bigg) \\
    &\phantom{{}=1}\cdot
    \frac{\lambda(\L^{\text{central}})\min_{j \in [M]}\lambda(\L^j)}{M}f(\tOPT(f,\D,|\L^{\text{central}}|))\label{eq:dist-dmgt-filt-M-bound}.
\end{align}

Also,
\begin{align}
    f(\L^{\text{central}})
    &\geq
    \lambda(\L^{\text{central}})f(\tOPT(f,\L,|\L_{\text{cent}}|)) 
    && \text{by Theorem \ref{thm:dist-dmgt}}
    \\
    &\geq
    \min\bigg(1,\frac{|\L^{\text{central}}|}{|\L^*|}\bigg)\lambda(\L^{\text{central}})f(\L^*)\\
    &\geq
    \min\bigg(1,\frac{|\L^{\text{central}}|}{|\L^*|}\bigg)\lambda(\L^{\text{central}})\lambda(\L^*)f(\tOPT(f,\D^*,|\L^*|))
    && \text{by Theorem \ref{thm:dist-dmgt}}
    \\
    &\geq
    \min\bigg(1,\frac{|\L^{\text{central}}|}{|\L^*|}\bigg)\lambda(\L^{\text{central}})\lambda(\L^*)f(\{\argmax_{e \in \D^*}f(e)\})\\
    &\geq
    \min\bigg(1,\frac{|\L^{\text{central}}|}{|\L^*|}\bigg)\lambda(\L^{\text{central}})\lambda(\L^*)f(\{e^*\})\\
    &\geq
    \min\bigg(1,\frac{|\L^{\text{central}}|}{|\L^*|}\bigg)\frac{\lambda(\L^{\text{central}})\lambda(\L^*)}{|\L^{\text{central}}|}f(\tOPT(f,\D,|\L^{\text{central}}|))\\
    &\geq
    \min\bigg(1,\frac{|\L^{\text{central}}|}{|\L^*|}\bigg)\frac{\lambda(\L^{\text{central}})\min_{j \in [M]}\lambda(\L_j)}{|\L^{\text{central}}|}f(\tOPT(f,\D,|\L^{\text{central}}|))\\
    &\geq
    \min\bigg(1,\frac{|\L^{\text{central}}|}{\max_{j \in [M]}|\L_j|}\bigg)\frac{\lambda(\L^{\text{central}})\min_{j \in [M]}\lambda(\L_j)}{|\L^{\text{central}}|}f(\tOPT(f,\D,|\L^{\text{central}}|)).
    \label{eq:dist-dmgt-filt-k-bound}
\end{align}
Combining \eqref{eq:dist-dmgt-filt-M-bound} and \eqref{eq:dist-dmgt-filt-k-bound} gives
\begin{equation}
  f(\L^{\text{central}}) \geq
  \min\bigg(1,\frac{|\L^{\text{central}}|}{\max_{j \in [M]}|\L^{j}|}\bigg) \min\bigg(1, \frac{\min_{j \in [M]}|\L^{j}|}{|\L^{\text{central}}|}\bigg)\frac{\lambda(\L^{\text{central}})\min_{j \in [M]}\lambda(\L^j)}{\min(M,|\L^{\text{central}}|)}f(\tOPT(f,\D,|\L^{\text{central}}|)),
\end{equation}
concluding the proof.
\end{proof}

\section{Supplementary material for class balance experiments}\label{apx:batch-dmgt}
\subsection{\texttt{Batch-DMGT}}
Our experiments require that the value function $f$ change between rounds. Since our formal guarantees for the algorithms so far in the paper presuppose a fixed value function, the guarantees do not immediately hold for this example. We now state an analog to \texttt{DMGT} for a changing value function, along with associated guarantees. 
\begin{algorithm}
    \caption{\texttt{Batch-DMGT}}
    \label{alg:3}
    \textbf{Input} Number of batches $B$; for each batch: data stream $\D^b \subset \X$, value function $f^b:2^{\D^b} \rightarrow \R^+$
\begin{algorithmic}[1]
    \State Set currently selected points $\L=\emptyset$.
    \For {$b=1,..,B$} 
    \State $\L^b \leftarrow \texttt{DMGT}(\D^b,f^b)$
    \State $\L\leftarrow \L \cup \L^b$
    \EndFor
    \State {\bfseries return} $\L$
\end{algorithmic}
\end{algorithm}

\begin{cor}[Near-optimality of \texttt{Batch-DMGT}]\label{cor:batch-dmgt}
Suppose \text{\texttt{Batch-DMGT}} runs on $B$ arbitrary data streams $\{\D^b\}_{b\in[B]}$ with accompanying value functions $\{f^b\}_{b\in[B]}$, and constructs per-batch threshold sets $\{\T^b\}_{b \in [B]}$. Then for each $b \in [B]$
\begin{equation}\label{eq:batch-dmgt-per-batch-bound}
    f^b(\L^b)
    \geq
    \frac{\tau^b_{\min}}{\tau^b_{\min} + \tau^b_{\max}}f(\tOPT(f_b,\D^b,|\L^b|)) + \frac{\tau^b_{\min}\tau^b_{\max}}{\tau^b_{\min} + \tau^b_{\max}} |\tOPT(f^b,\D^b,|\L^b|) \cap \L^b|,
\end{equation}
where $\tau^b_{\min}$ and $\tau^b_{\max}$ are the minimum and maximum elements of $\T^b$ respectively, and
\begin{equation}\label{eq:batch-dmgt-total-bound}
    f^B(\L)
    \geq
    \frac{\tau_{\min}}{B(\tau_{\min} + \tau_{\max})}f^B(\tOPT(f^B,\D,|\L|)) + \frac{\tau_{\min}\tau_{\max}}{B(\tau_{\min} + \tau_{\max})} |\tOPT(f^B,\D,|\L|) \cap \L|,
\end{equation}
where $\tau_{\min}$ and $\tau_{\max}$ are the minimum and maximum elements of $\T$ respectively.
\end{cor}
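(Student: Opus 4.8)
The plan is to handle the two displayed bounds separately. The per-batch bound \eqref{eq:batch-dmgt-per-batch-bound} is essentially immediate, while the aggregate bound \eqref{eq:batch-dmgt-total-bound} is an adaptation of the proof of Theorem~\ref{thm:fed-dmgt}.

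For \eqref{eq:batch-dmgt-per-batch-bound}: inside batch $b$, Algorithm~\ref{alg:3} does nothing but call $\texttt{DMGT}(\D_b,f_b)$, returning $\L_b$ and $\T_b$. Since each $f_b$ is submodular, nonnegative, and monotone increasing by hypothesis, Theorem~\ref{thm:dmgt} applies verbatim to this single run (with stream $\D_b$, value function $f_b$, and thresholds $\T_b$) and yields \eqref{eq:batch-dmgt-per-batch-bound} directly. Nothing further is required here.

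For \eqref{eq:batch-dmgt-total-bound}, I would reprise the proof of Theorem~\ref{thm:fed-dmgt} with the $B$ batches playing the role of the $M$ agents and $\D=\cup_{b\in[B]}\D_b$. Setting $\V_b = (\tOPT(f_B,\D,|\L|)\cap \D_b)\setminus \L_b$, I expand, for each $b$, the marginal-gain-minus-threshold quantities that were nonpositive at selection time, use submodularity to replace each prefix $\L_{b,<v}$ by $\L_b\cup\V_{b,<v}$, telescope to $f_b(\L_b\cup\V_b)-f_b(\L_b)$, bound each omitted threshold by $\tau_{\max}$, and finally pass to $\tOPT(f_B,\D,|\L|)$ via subadditivity and monotonicity, exactly as in \eqref{eq:dmgt-pf-bnd-1}. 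The selected-element side gives $f_b(\L_b)\geq\tau_{\min}|\L_b|$ as in \eqref{eq:dmgt-pf-bnd-2}, so $\sum_b f_b(\L_b)\geq \tau_{\min}|\L|$. The factor $1/B$ then enters through $f_B(\L)=f_B(\cup_b\L_b)\geq \max_b f_B(\L_b)\geq \tfrac1B\sum_b f_B(\L_b)$ by monotonicity.

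The real difficulty is that the value function is not fixed across batches, and this bites in precisely two places where the proof of Theorem~\ref{thm:fed-dmgt} silently used a common $f$. First, the gluing step that in the federated proof reads $\sum_j f(\L_j\cup\V_j)\geq f(\cup_j(\L_j\cup\V_j))$ is subadditivity of a \emph{single} function; with distinct $f_b$ there is no one function against which to apply it. Second, every selection-rule inequality is phrased in $f_b$ (that is the function \texttt{DMGT} thresholded against), whereas the target \eqref{eq:batch-dmgt-total-bound} is stated in $f_B$ and in $\tOPT(f_B,\D,|\L|)$, so the final averaging step wants $\sum_b f_B(\L_b)$ while the selection analysis only supplies $\sum_b f_b(\L_b)$. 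I expect both gaps to be closed by exploiting a monotone relationship along the sequence $\{f_b\}$---for instance $f_b\leq f_B$ on the relevant sets, so that $f_B(\L_b)\geq f_b(\L_b)$ and the single-function subadditivity can be applied with $f_B$ as an upper envelope---and this reconciliation, rather than any of the submodularity manipulations, is the crux of the argument.
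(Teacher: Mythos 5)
Your handling of the per-batch bound coincides exactly with the paper's: inequality \eqref{eq:batch-dmgt-per-batch-bound} is a direct application of Theorem~\ref{thm:dmgt} to the single run $\texttt{DMGT}(\D_b,f_b)$, and nothing more is needed. Your high-level plan for \eqref{eq:batch-dmgt-total-bound} is also the paper's plan---in fact the paper's entire proof of the aggregate bound is one sentence asserting that it is ``identical to the proof of Theorem~\ref{thm:fed-dmgt}, with $j$ indexing the batches $[B]$ instead of agents $[M]$.'' The paper never engages with the issue you raise, so your diagnosis is sharper than the paper's own proof: the two steps you flag are genuine obstructions. The gluing step $\sum_b f(\L_b\cup\V_b)\geq f(\cup_b(\L_b\cup\V_b))$ is subadditivity of a \emph{single} function, and the selection-rule inequalities live in $f_b$ while the conclusion lives in $f_B$; the ``identical'' proof silently assumes a common value function in both places.

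However, your proposed repair does not close the gap, and in fact no repair can: \eqref{eq:batch-dmgt-total-bound} is false as stated. The envelope condition $f_b\leq f_B$ fixes only the averaging step ($\sum_b f_B(\L_b)\geq\sum_b f_b(\L_b)$); the gluing step needs the \emph{reverse} inequality $f_b\geq f_B$ on the sets $\L_b\cup\V_b$, where $\V_b=(\tOPT(f_B,\D,|\L|)\cap\D_b)\setminus\L_b$, since one must pass from the quantity $\sum_b f_b(\L_b\cup\V_b)$ supplied by the selection analysis down to $f_B(\cup_b(\L_b\cup\V_b))$, and subadditivity of $f_B$ bounds the latter by $\sum_b f_B(\L_b\cup\V_b)$, not by the sum you control. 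The two requirements pull in opposite directions and jointly force the $f_b$ to agree with $f_B$ on the selected sets. Concretely, take $B=2$, disjoint streams $\D_1=\{a_1,\dots,a_n\}$ and $\D_2=\{b_1,\dots,b_n\}$, $f_1(S)=|S|$, $f_2(S)=|S\cap\D_2|$ (both modular, hence submodular, nonnegative, monotone increasing), with uniform thresholds $1/2$ in batch $1$ and $2$ in batch $2$. Then \texttt{DMGT} selects $\L_1=\D_1$ (every marginal gain is $1>1/2$) and $\L_2=\emptyset$ (since $1\not>2$), so $\L=\D_1$ and $f_2(\L)=0$; but $\tOPT(f_2,\D,n)=\D_2$ has $f_2$-value $n$, $|\tOPT(f_2,\D,n)\cap\L|=0$, and with $\tau_{\min}=1/2$, $\tau_{\max}=2$ the right-hand side of \eqref{eq:batch-dmgt-total-bound} equals $\frac{1/2}{2\cdot(5/2)}\,n=n/10>0$, a contradiction. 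So the aggregate bound requires an additional hypothesis tying the $f_b$ together (e.g., all equal, which just recovers Theorem~\ref{thm:fed-dmgt}); your instinct that this reconciliation is the crux was exactly right, but the crux cannot be resolved in the generality claimed by the corollary or by the paper's proof.
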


\begin{proof}
Inequality~\eqref{eq:batch-dmgt-per-batch-bound} is a direct consequence of Theorem 1, taking $M=1$. The proof for inequality~\eqref{eq:batch-dmgt-total-bound} is identical to the proof of Theorem 1, with $j$ indexing the batches $[B]$ instead of agents $[M]$.
\end{proof}

\subsection{Properties of class balance value function}\label{subapx:class-balance-function-props}

\begin{proof}[Submodularity, nonnegativity, and increasing monotonicity of $f_{\hat{\pi}}$]
We define the value of adding a new point $(x,y)$ to the currently labeled set $\L$ to be
\begin{equation}
    f_{\hat{\pi}}(\L \cup \{(x,y)\}) 
    = 
    \sum_{k \in [K]} {\hat{\pi}}_k(x) g\left(\sum_{z \in \L \cup \{(x,y)\}} {\hat{\pi}}_k(z_x)\right),
\end{equation}
where $z_x$ denotes the $x$-coordinate of point $z=(x,y)$.
Nonnegativity and increasing monotonicity of $f_{\hat{\pi}}$ are obvious. Submodularity, which we now show, follows from the concavity of $g$.

Let $S \subseteq T \subseteq \X \times \Y$ and $x \in \X \times \Y \backslash T$.
Since $g$ is a concave function, its secant lines have decreasing slope. Therefore, $\forall\  \hat{\pi}_{k\in [K]}(x) > 0$, 
\begin{equation}
    \frac{g\bigg(\sum_{z \in S \cup \{(x,y)\}} \hat{\pi}_k(z_x)\bigg) - g\bigg(\sum_{z \in S} \hat{\pi}_k(z_x)\bigg)}{\hat{\pi}_k(x)}
    \geq 
    \frac{g\bigg(\sum_{z \in T \cup \{(x,y)\}} \hat{\pi}_k(z_x)\bigg) - g\bigg(\sum_{z \in T} \hat{\pi}_k(z_x)\bigg)}{\hat{\pi}_k(x)}.
\end{equation}
It then follows, using the definition of $f_{\hat{\pi}}$ for the equalities, that
\begin{align}
    f_{\hat{\pi}}(S \cup \{(x,y)\}) - f_{\hat{\pi}}(S) 
    &= 
    \sum_{k \in [K]} \hat{\pi}_k(x)\left[g\left(\sum_{z \in S \cup \{(x,y)\}} \hat{\pi}_k(z_x)\right) - g\left(\sum_{z \in S} \hat{\pi}_k(z_x)\right)\right]\label{eq:exp-marg-gain} 
    \\&\geq
    \sum_{k \in [K]} \hat{\pi}_k(x)\left[g\left(\sum_{z \in T \cup \{(x,y)\}} \hat{\pi}_k(z_x)\right) - g\left(\sum_{z \in T} \hat{\pi}_k(z_x)\right)\right]\label{eq:dim-returns}
    \\&= 
    f_{\hat{\pi}}(T \cup \{(x,y)\}) - f_{\hat{\pi}}(T),
\end{align}
which proves submodularity of $f_{\hat{\pi}}$. (Note that if $\hat{\pi}_k(x) = 0$ for any $k\in[K]$, we simply remove that term from the sums above and the argument still holds).

Note that in the marginal gain expression in \eqref{eq:exp-marg-gain}, $f_{\hat{\pi}}(S)$ depends on the current point $\{x\}$ even though the set $S$ is independent of $\{x\}$. This is necessary in order for $f_{\hat{\pi}}$ to have a diminishing returns property in \eqref{eq:dim-returns}. Therefore, whenever we compute \emph{marginal gain} in our experiments, we use this $x$-dependent form of value, $f_{\hat{\pi}}$. However, when we only want to  compute the value of a set, independent of marginal gain from a certain point (e.g. the Sieve-Streaming algorithm in our experiments requires this), we calculate value independently of $x$. For our experimental setup, this value function would look like:
\begin{equation}
    f(S) = \sum_{k=1}^K \sqrt{|\{(x,y)\} \in S: y=k|}.
\end{equation}

Finally, in practice we observe that assuming access to labels of previously selected points when making selection decisions improves performance. Therefore, in our experiments in Section 3 of the main text, given currently labeled set $S$, we instead use the value function
\begin{equation}\label{eq:cb-exp-value-func}
    f_{\hat{\pi}}(S \cup \{(x,y)\}) 
    = 
    \sum_{k \in [K]} \hat{\pi}_k(x)g\big(1 + |\{(x,y) \in S:y=k\}|\big),
\end{equation} 
taking $g(x)=\sqrt{x}$, and label the current point $x$ if
\begin{equation*}
    \sum_{k \in [K]} \hat{\pi}_k(x)\big[g(1 + |\{(x,y) \in S: y = k\}|) - g(|\{(x,y) \in S: y = k\}|)\big]
    >
    \tau_{(x,y)}.
\end{equation*}
Using the same sequence of steps as above, \eqref{eq:cb-exp-value-func} can be shown to be submodular.
\end{proof}

\subsection{Further discussion of Experiment 3 in main text}
Using our selection rule (Eq. 4 in main text), we can directly see how uniform vs. increasing thresholds determine class balance in selected sets. For all experiments, we set the uniform thresholds to be 0.1, and the increasing thresholds to be [0.1,0.1,0.13,0.13,0.15,0.15] for the 6 ImageNet selection rounds and [0.1,0.1,0.13,0.13,0.15,0.15,0.17,2.0] for the 8 MNIST selection rounds. Let's say the models underlying our value functions are perfectly accurate (i.e. all softmax scores $\hat{\pi}_k$ are either 0 or 1). Then from the selection rule (Eq. 4 in main text), our \texttt{DMGT} algorithms should ideally select $\sim$ 25 points from each class in the uniform threshold regime (since $\sqrt{25} - \sqrt{24} \approx 0.1$). Similarly, for the increasing threshold regime, our algorithms should ideally select $\sim$ [25,25,15,15,11,11] points per class in the 6 ImageNet rounds and $\sim$ [25,25,15,15,11,11,9,6] points per class in the 8 MNIST rounds. The convergence of the \texttt{DMGT} algorithms to the black- and gray- dotted lines in Figure 3 in the main text approximates this behavior.

\subsection{Additional experimental results}
In Figure \ref{fig:dmgt-acc} we report ImageNet results on accuracy and class-balance convergence, running the subroutine of our algorithms, \texttt{DMGT}, for a single agent. In Figure \ref{fig:imnet-dmgt-class-balance-acc} we show that after running just a few (as few as 1) rounds of \texttt{DMGT}, we select a class-balanced subset from the class-imbalanced stream.

\begin{figure}[htbp]
  \centering
  \begin{subfigure}{0.49\textwidth}
    \includegraphics[width=\textwidth,scale=4]{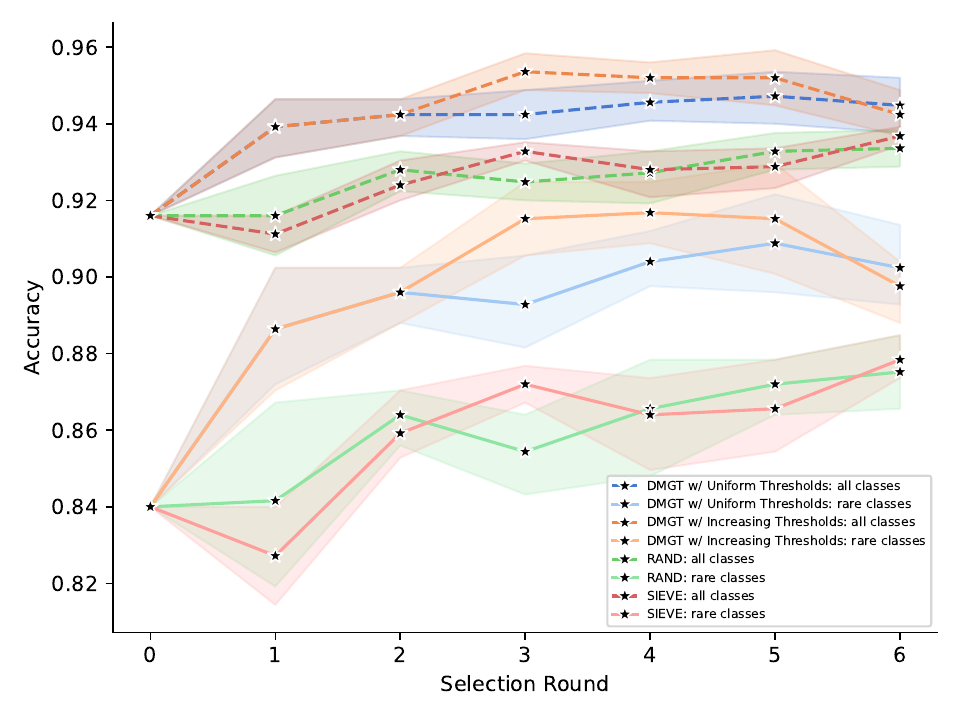}
    \subcaption{ImageNet: accuracy on all and rare classes}
    \label{fig:imnet-dmgt-acc}
  \end{subfigure}
  \hfill
  \begin{subfigure}{0.49\textwidth}
    \includegraphics[width=\textwidth,scale=4]{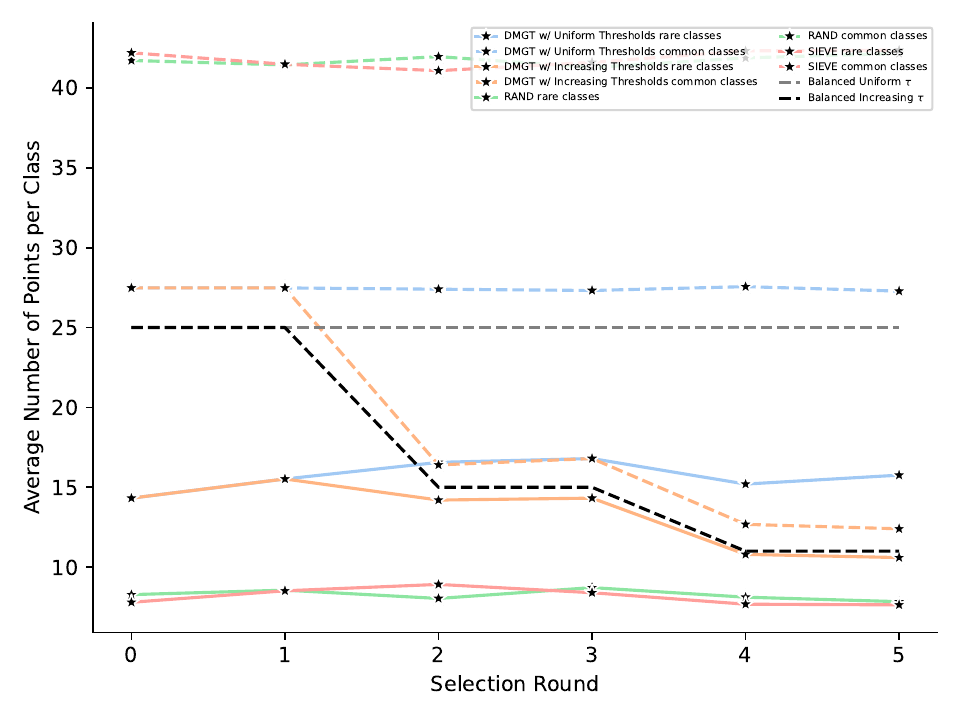}
    \subcaption{ImageNet: class balance on all and rare classes}
    \label{fig:imnet-dmgt-class-balance-acc}
  \end{subfigure}
  \caption{\textbf{Prediction accuracy on ImageNet of \texttt{DMGT}.} We show prediction accuracy of \texttt{DMGT} with uniform and increasing threshold schedules vs. \texttt{RAND} and \texttt{SIEVE} after running all algorithms for $6$ selection rounds on ImageNet. Figures show $95\%$ confidence intervals around the mean accuracy value over $5$ random permutations of $\D$ for ImageNet.}
  \label{fig:dmgt-acc}
\end{figure}

\section{Submodularity of value functions from Table \ref{table: value functions}}\label{apx:value-props}
\textbf{Class Balance}: See Section \ref{subapx:class-balance-function-props}
\newline
\newline
\textbf{Facility Location}: The Facility Location function is a special case of a weighted matroid rank function and is therefore submodular \cite{iyer2015}.
\newline
\newline
\textbf{Graph Cut}: Let $S \subseteq T$ and $z \in T^c$. Then
\begin{align}
    \sum_{z \in \S \cup \{x\}, y \in \Q}s(z,y) - \sum_{z \in \S, y \in \Q}s(z,y) 
    &=
    \sum_{y \in \Q}s(x,y) \\
    &=
    \sum_{z \in \T \cup \{x\}, y \in \Q}s(z,y) - \sum_{z \in \T, y \in \Q}s(z,y), 
\end{align}
showing this particular graph function is modular.
\section{Fully online \texttt{Distributed DMGT w/ Filtering}}
\label{apx:dist-dmgt-filt}
We give the fully online version of \texttt{Distributed DMGT w/ Filtering} in Algorithm~\ref{alg:dist-dmgt-filt-online}.
\begin{algorithm}
    \caption{\texttt{Distributed DMGT w/ Filtering}: Distributed Dynamic Marginal Gain Thresholding w/ Filtering (fully online version)}\label{alg:dist-dmgt-filt-online}
    \textbf{Input} $M$ streams of data $\{\D^j\}_{j\in[M]}$;
    value function $f:2^{\D} \rightarrow \R^{+}$ (where $\D = \cup_{j \in [M]}\D^j$)
    % ; selection budget $k$
\begin{algorithmic}[1]
    \State Set distributed agents' individual selected sets $\{\L^j=\emptyset\}_{j \in [M]}$ and central agent's selected set $\L^{\text{central}}_0=\emptyset$.
    \For {$t=1,...,|\D|$}
    \State Point $x_t$ arrives from corresponding stream $\D^i$. 
    % \If {$|\L^i_{t-1}|<k$}
    \State Agent $i$ sets threshold $\tau^i_t = \mathcal{A}(\{x_t\} \cup \{\text{all previous points agent } i \text{ has seen}\},\tau_{1}^i,...,\tau_{t-1}^i)$.
    % \Else 
    % \State $\tau^i_t=f(\{x_t\})$
    % \EndIf
    \If{$f(\L^i_{t-1} \cup \{x_t\}) - f(\L^i_{t-1}) > \tau^i_t$}
    \State $\L^i_t \leftarrow \L^i_{t-1} \cup \{x_t\}$
    \State Agent $i$ broadcasts $x_t$ to the central agent.
    % \If {$|\L^{\text{central}}_{t-1}|<k$}
    \State Central agent sets threshold $\tau^{\text{central}}_t = \mathcal{A}(\{x_t\}\cup\{\text{all previous points central agent has seen}\}, \tau_{1}^{\text{central}},...,\tau_{t-1}^{\text{central}})$.
    % \Else 
    % \State $\tau^{\text{central}}_t=f(\{x_t\})$
    % \EndIf
    \If{$f(\L^{\text{central}}_{t-1} \cup \{x_t\}) - f(\L^{\text{central}}_{t-1}) > \tau^{\text{central}}_t$}
    \State $\L^{\text{central}}_t \leftarrow \L^{\text{central}}_{t-1} \cup \{x_t\}$
    \Else 
    \State $\L^{\text{central}}_t \leftarrow \L^{\text{central}}_{t-1}$
    \EndIf
    \Else
    \State $\L^i_t \leftarrow \L^i_{t-1}$
    \State $\L^{\text{central}}_t \leftarrow \L^{\text{central}}_{t-1}$
    \EndIf
    \State $\L^{j \neq i}_t \leftarrow \L^{j \neq i}_{t-1}$
    \EndFor
    \State {\bfseries return} $\L^{\text{central}} = \L_{|\D|}^{\text{central}}$.
\end{algorithmic}
\end{algorithm}

\section{Relevant Definitions and Theorems}\label{apx:rel-defs}
\begin{definition}[Monotonicity]\label{def:mon}
A set function $f:2^{\X} \rightarrow \R$ is \emph{monotone increasing} if $\ \forall S \subseteq T$ with $S,T \subseteq \X$
\begin{equation}
    f(S) \leq f(T)
\end{equation}
and \emph{monotone decreasing} if $\ \forall S\subseteq T$ with $S,T \subseteq \X$
\begin{equation}
    f(S) \geq f(T)
\end{equation}
\end{definition}
\begin{definition}[Sub-additivity]\label{def:subadd}
A set function $f:2^{\X} \rightarrow \R$ is \emph{sub-additive} if $\ \forall S,T \subseteq \X$
\begin{equation}
    f(S \cup T)
    \leq
    f(S) + f(T)
\end{equation}
\end{definition}
\begin{definition}[Equivalent definition of submodularity]\label{def:submod-alt}
A set function $f:2^{\X} \rightarrow \R$ is \emph{submodular} if $\ \forall S,T \subseteq \X$
\begin{equation}
    f(S) + f(T)
    \geq
    f(S \cup T) + f(S \cap T)
\end{equation}
\end{definition}
\begin{theorem}[Sub-additivity of nonnegative submodular functions]\label{thm:subadd of submod}
Nonnegative submodular functions are sub-additive.
\end{theorem}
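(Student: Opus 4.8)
The plan is to derive sub-additivity directly from the lattice form of submodularity given in Definition~\ref{def:submod-alt}, using nonnegativity to discard the intersection term. First I would fix arbitrary sets $S, T \subseteq \X$. Applying Definition~\ref{def:submod-alt} to this pair gives immediately
\begin{equation}
    f(S) + f(T) \geq f(S \cup T) + f(S \cap T).
\end{equation}
Since $S \cap T \subseteq \X$, nonnegativity of $f$ yields $f(S \cap T) \geq 0$, so the right-hand side can only shrink when the intersection term is dropped.

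Chaining these two facts, I would conclude
\begin{equation}
    f(S) + f(T) \geq f(S \cup T) + f(S \cap T) \geq f(S \cup T),
\end{equation}
which is exactly the sub-additivity inequality of Definition~\ref{def:subadd}. Because $S$ and $T$ were arbitrary, this holds for all pairs, completing the argument.

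There is essentially no obstacle here beyond selecting the right starting point: the only subtlety is recognizing that the \emph{symmetric} (lattice) characterization of submodularity in Definition~\ref{def:submod-alt}, rather than the marginal-gain form in Definition~\ref{def:submod}, is the convenient one, since it already places $f(S \cup T)$ and $f(S \cap T)$ on the same side. I would note in passing that sub-additivity does genuinely require the nonnegativity hypothesis --- submodularity alone does not suffice, as a submodular function with a large positive value on some intersection $S \cap T$ could violate the bound. Establishing the equivalence between Definitions~\ref{def:submod} and~\ref{def:submod-alt} is standard and I would treat it as given rather than reprove it.
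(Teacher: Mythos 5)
Your proof is correct and is essentially identical to the paper's: both apply the lattice characterization of submodularity (Definition~\ref{def:submod-alt}) to arbitrary $S,T \subseteq \X$ and drop the $f(S \cap T)$ term via nonnegativity. One small correction to your closing aside: a \emph{large positive} value of $f(S \cap T)$ cannot cause failure, since submodularity itself guarantees the first inequality regardless; what nonnegativity rules out is a \emph{negative} $f(S \cap T)$, which would break the step of discarding that term --- e.g., the modular (hence submodular) function $f(S) = -|S|$ fails sub-additivity whenever $S \cap T \neq \emptyset$.
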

\begin{proof}[Proof of Theorem~\ref{thm:subadd of submod}]
Let $f:2^{\X} \rightarrow \R^+$ be a nonnegative submodular function. Then, by Definition~\ref{def:submod-alt}, $\ \forall S,T \subseteq \X$, we have
\begin{equation}
    f(S) + f(T)
    \geq
    f(S \cup T) + f(S \cap T)
    \geq 
    f(S \cup T).
\end{equation}
\end{proof}
\end{document}